\documentclass[sigconf]{aamas}

\usepackage{balance} 

\usepackage[dvipsnames]{xcolor}
\usepackage{svg}
\usepackage{multirow}
\usepackage{subcaption}
\usepackage{amsthm}
\usepackage{algorithm}
\usepackage[noend]{algpseudocode}
\usepackage{bbm}
\usepackage{xspace}

\doi{AGSA2170}

\makeatletter
\gdef\@copyrightpermission{
  \begin{minipage}{0.2\columnwidth}
   \href{https://creativecommons.org/licenses/by/4.0/}{\includegraphics[width=0.90\textwidth]{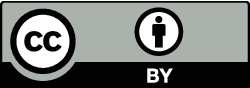}}
  \end{minipage}\hfill
  \begin{minipage}{0.8\columnwidth}
   \href{https://creativecommons.org/licenses/by/4.0/}{This work is licensed under a Creative Commons Attribution International 4.0 License.}
  \end{minipage}
  \vspace{5pt}
}
\makeatother

\setcopyright{ifaamas}
\acmConference[AAMAS '26]{Proc.\@ of the 25th International Conference
on Autonomous Agents and Multiagent Systems (AAMAS 2026)}{May 25 -- 29, 2026}
{Paphos, Cyprus}{C.~Amato, L.~Dennis, V.~Mascardi, J.~Thangarajah (eds.)}
\copyrightyear{2026}
\acmYear{2026}
\acmDOI{}
\acmPrice{}
\acmISBN{}

\title[OSIL: Learning Offline Safe Imitation Policy]{OSIL: Learning Offline Safe Imitation Policies with Safety Inferred from Non-preferred Trajectories}

\author{Returaj Burnwal}
\affiliation{
  \institution{Indian Institute of Technology Madras}
  \city{Chennai}
  \country{India}}
\email{cs21d406@smail.iitm.ac.in}

\author{Nirav Pravinbhai Bhatt}
\affiliation{
  \institution{Indian Institute of Technology Madras}
  \city{Chennai}
  \country{India}}
\email{niravbhatt@dsai.iitm.ac.in}

\author{Balaraman Ravindran}
\affiliation{
  \institution{Indian Institute of Technology Madras}
  \city{Chennai}
  \country{India}}
\email{ravi@dsai.iitm.ac.in}

\begin{abstract}
    This work addresses the problem of offline safe imitation learning (IL), where the goal is to learn safe and reward-maximizing policies from demonstrations that do not have per-timestep safety cost or reward information. In many real-world domains, online learning in the environment can be risky, and specifying accurate safety costs can be difficult. However, it is often feasible to collect trajectories that reflect undesirable or unsafe behavior, implicitly conveying what the agent should avoid. We refer to these as non-preferred trajectories. We propose a novel offline safe IL algorithm, \our, that infers safety from non-preferred demonstrations. We formulate safe policy learning as a Constrained Markov Decision Process (CMDP). Instead of relying on explicit safety cost and reward annotations, \our reformulates the CMDP problem by deriving a lower bound on reward maximizing objective and learning a cost model that estimates the likelihood of non-preferred behavior. Our approach allows agents to learn safe and reward-maximizing behavior entirely from offline demonstrations. We empirically demonstrate that our approach can learn safer policies that satisfy cost constraints without degrading the reward performance, thus outperforming several baselines.
\end{abstract}

\keywords{Imitation Learning, Offline RL, Safe RL}

\newcommand{\BibTeX}{\rm B\kern-.05em{\sc i\kern-.025em b}\kern-.08em\TeX}

\theoremstyle{plain}

\newtheorem{lemma}{Lemma}
\newtheorem{theorem}{Theorem}

\newtheorem{definition}{Definition}

\newtheorem*{theorem*}{Theorem}

\newcommand{\our}{OSIL\xspace}
\newcommand\aeq{\mathrel{\overset{\makebox[0pt]{\mbox{\normalfont\tiny\sffamily (a)}}}{=}}}

\definecolor{lightblue}{rgb}{0.8,0.9,1}
\newcommand{\highlight}[2][lightblue]{\mathchoice
  {\colorbox{#1}{$\displaystyle#2$}}
  {\colorbox{#1}{$\textstyle#2$}}
  {\colorbox{#1}{$\scriptstyle#2$}}
  {\colorbox{#1}{$\scriptscriptstyle#2$}}}

\begin{document}


\pagestyle{fancy}
\fancyhead{}


\maketitle 


\section{Introduction}
\label{sec:introduction}
\begin{figure*}[t!]
    \centering
    \includegraphics[width=0.95\textwidth]{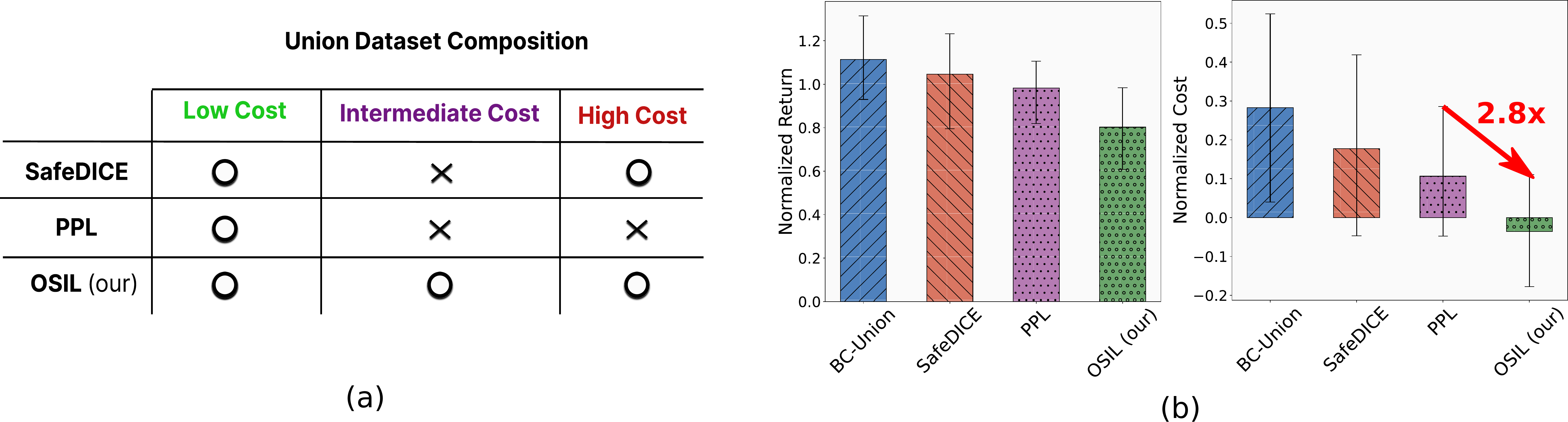}
    \caption{(a) PPL works well when the union dataset consists of {\color{ForestGreen} \textbf{low-cost}} trajectories and SafeDICE requires that the union dataset contains {\color{ForestGreen} \textbf{low-cost}} or {\color{Red} \textbf{high-cost}} trajectories. This assumption is often unrealistic in practice since trajectories collected from real-world typically span a spectrum of safety costs. \our do not make any assumptions about the safety cost of a trajectory in the union dataset, as it may contain trajectories with varying levels of safety cost. (b) Under this setting, we observe that \our learns a high-return, safer (i.e., low-cost) policy, outperforming the best baseline by nearly 2.8x. We report the mean performance of the algorithm after 1 million training steps, aggregated across both velocity-constrained and navigation tasks. Mean and 95\% CIs over 5 seeds.}
    \label{fig:aggregated_performance}
\end{figure*}

Reinforcement Learning (RL) provides a framework for enabling agents to autonomously learn intelligent behavior through interaction with their environment. While RL has recently demonstrated remarkable success in various simulated tasks \cite{humanDeepRL,masteringGo,dota2}, its real-world deployment remains limited \cite{robot_challenges,social_challenges,safedice_neurips_2023} due to the following reasons. First, RL typically requires many online interactions with the environment to learn a good policy. This reliance on data collection poses a significant barrier in domains where such interactions are costly or risky, such as robotics or autonomous driving. Second, designing a suitable reward function that accurately reflects the desired behavior is often difficult in practice. Since the agent learns by maximizing the reward, poorly defined reward functions can lead to unintended and potentially dangerous behavior \cite{ill_defined_reward_aaai_2014, ill_defined_reward_iclr_2020}. Third, many real-world applications necessitate adherence to safety constraints alongside reward maximization. Safety in RL is usually modeled as a Constrained Markov Decision Process (CMDP) \cite{cmdp_1999}. In CMDP, the agent aims to maximize rewards over time while satisfying safety cost constraints. However, specifying appropriate per-timestep constraint cost information can be equally difficult. For instance, defining constraint cost information for assessing toxicity in conversational agents or complex domains like surgical robotics can be challenging.

Consider a real-world scenario of autonomous driving where we have access to a large collection of human-driven trajectories that are high-return, but exhibit varying degrees of safety cost. Standard imitation learning (IL) algorithms \cite{il_1_neurips_1996, il_2_elsevier_2009} can learn such high-return behavior. However, in most real-world applications, it is often necessary to ensure that learned behaviors also satisfy safety constraints.
In this paper, we address the problem of offline safe IL, when given a large set of high-return trajectories with varying degrees of safety cost, \textit{union trajectory dataset}, and a small set of \textit{non-preferred trajectory dataset} that are high-return and high-cost. We aim to learn a policy that infers safety from these non-preferred demonstrations. The non-preferred trajectories that violate safety constraints are often naturally collected. For example, instances of illegal driving, like running red lights, can be used as non-preferred trajectories. This setting has mostly been unexplored and cannot be effectively addressed using the existing methods.  The recent work on offline safe IL,  SafeDICE \cite{safedice_neurips_2023}, learns a safe policy under this setting. However, the SafeDICE makes a limiting assumption: that the \textit{union dataset}, high-return trajectories, contains either low-cost or high-cost behavior. In practice, high-return trajectories may not fit neatly into this binary categorization. Another relevant approach is based on preference-based policy learning (PPL) \cite{trex_icml_2019,b_pref_neurips_2021,offline_pref_tmlr_2023,pebble_icml_2021}, which learns a policy based on preferences over pairs of demonstrations. PPL can be used to train a policy that favors \textit{union trajectories} over non-preferred ones. However, this approach assumes that \textit{union trajectories} are safe, which is not guaranteed. A more realistic assumption would be to allow these high-return \textit{union trajectories} to have varying degrees of safety cost.

We propose a novel offline safe IL algorithm, \our, that infers safety from these non-preferred demonstrations. Our key contributions are as follows: 1) We formulate the safe policy learning as a CMDP. Since we do not have access to reward and cost information, we reformulate the CMDP problem by deriving a lower bound of the reward maximizing objective and approximating the safety constraints by learning a parameterized cost model that estimates the likelihood of a state-action pair being non-preferred. 2) We then solve the reformulated CMDP problem using a Lagrangian relaxation with an adaptive penalty coefficient to obtain a policy that effectively balances safety and performance. 3) We empirically demonstrate that \our learns safer policies (Figure \ref{fig:aggregated_performance}) than existing state-of-the-art offline safe IL methods, while maintaining competitive task success.


\section{Related Works}
Offline Imitation Learning \cite{bc,valuedice,demodice_iclr_2022} primarily focuses on replicating actions of the expert demonstrations without explicitly considering safety. It implicitly assumes that expert demonstrations are safe, but if they contain non-preferred trajectories, simply imitating them can lead to learning policies that are non-preferred. We address the problem of learning safe imitation policy by seeking limited access to non-preferred trajectories and an abundance of high-return union trajectories that contain trajectories with varying costs. This specific scenario has received limited attention in the literature.

\noindent \textbf{Learning Imitation Policy from Non-Preferred Trajectories}: SafeDICE \cite{safedice_neurips_2023} directly estimating the stationary distribution of the low-cost policy using non-preferred and union trajectory dataset. However, it assumes that the union dataset contain either low-cost or high-cost trajectories, a simplification that often does not hold in practice. The estimated stationary distribution is then used to learn a safe policy that avoids high-cost behaviors. 

\noindent \textbf{Learning Imitation Policy from Suboptimal Trajectories}: Preference based policy learning (PPL) methods, such as T-REX \cite{trex_icml_2019}, B-Pref \cite{b_pref_neurips_2021}, PEBBLE \cite{pebble_icml_2021}, OPRL \cite{offline_pref_tmlr_2023}, aim to learn a reward function from ranked trajectories. These methods prioritize learning reward functions that assign higher total reward to higher ranked trajectories in the dataset. T-REX, B-Pref, PEBBLE then uses online-RL \cite{ppo,sac} algorithm, and OPRL uses offline-RL \cite{cql_neurips_2020,neorl_neurips_2022} algorithm to learn the policy. Since ranked trajectories are difficult to obtain, D-REX \cite{drex_corl_2020} and SSRR \cite{ssrr_corl_2020} both learn reward function by generating trajectories of varying optimality by injecting noise to the policy learned from suboptimal demonstrations. However, the trajectory generation requires online interaction with the environment, which is not feasible in an offline setting. Another method, DWBC \cite{dwbc_icml_2022} uses positive unlabeled learning \cite{pu_learning_1_2008,pu_learning_2_2021} to train a discriminator network. This discriminator network is subsequently used as a weight in the BC loss function.


\section{Preliminaries}
\begin{figure*}[!t]
    \centering
    \includegraphics[width=0.8\textwidth]{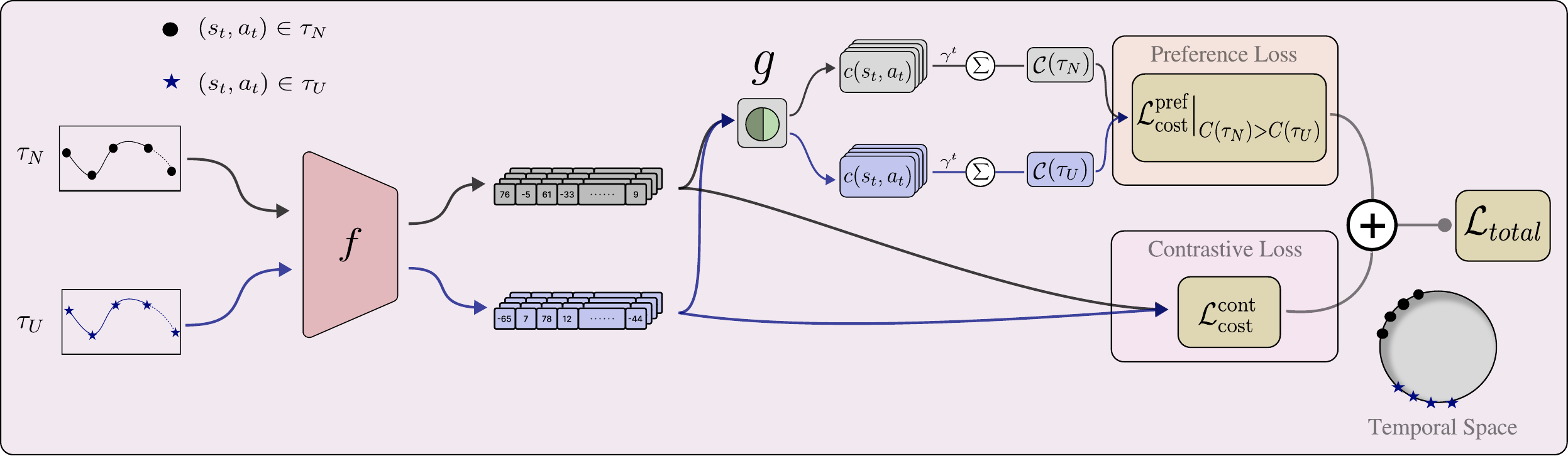}
    \caption{Overview of the cost learning model. $f$ and $g$ are a learnable encoder and a linear model, respectively. The cost model is trained by minimizing the two loss function: $\mathcal{L}_{\text{cost}}^\text{const}$ encourages temporally adjacent state-action pairs within a trajectory to remain close in the learned representation space, and $\mathcal{L}_\text{cost}^\text{pref}$ ensures that the discounted cost of the trajectory $\tau_N$ is greater than trajectory $\tau_U$.}
    \label{fig:method_cost_learning}
\end{figure*}

\subsection{Constrained Markov Decision Process}
Consider a finite Constrained Markov Decision Process (CMDP) \cite{cmdp_1999} represented by the tuple $\mathcal{M} = (\mathcal{S},\mathcal{A},\mathcal{T}, r, \rho_0, \gamma, \mathcal{C})$, where $\mathcal{S}$ and $\mathcal{A}$ represent state and action spaces, respectively. $\mathcal{T}: \mathcal{S}\times\mathcal{A} \rightarrow \Delta(\mathcal{S})$ is the transition probability, $r: \mathcal{S}\times\mathcal{A} \rightarrow \mathbb{R}$ denotes the immediate reward function, $\rho_0 \in \Delta(\mathcal{S})$ is the initial state distribution, and $\gamma \in (0,1)$ is the discount factor. $\mathcal{C} = \{(c_i,b_i)\}_{i=1}^m$ is a constraint set, where $c_i: \mathcal{S}\times\mathcal{A} \rightarrow \mathbb{R}_{\geq 0}$ is the $i$-th cost function and $b_i\in\mathbb{R}_{\geq 0}$ is the corresponding threshold. A stochastic policy $\pi: \mathcal{S} \rightarrow \Delta(\mathcal{A})$ corresponds to a map from state to a probability distribution over actions. The reward value function and action-value function  under policy $\pi$ is defined as $V_r^\pi(s) = \mathbb{E}_{\tau \sim \pi} [\sum_{t=0}^{T-1} \gamma^t r(s_t, a_t) | s_0=s]$ and $Q_r^\pi(s) = \mathbb{E}_{\tau \sim \pi} [\sum_{t=0}^{T-1} \gamma^t r(s_t, a_t) | s_0=s, a_0=a]$, where $\tau = (s_0, a_0, \dots, s_{T-1}, a_{T -1}, s_T)$ is a $T$ length trajectory sampled under policy $\pi$.
Similarly, the $i$-th cost action-value function is defined as $Q_{c_i}^\pi(s,a) = \mathbb{E}_{\tau \sim \pi}[\sum_{t=0}^{T-1} \gamma^t c_i(s_t, a_t) | s_0=s,a_0=a]$.
Therefore, the reinforcement learning problem in CMDP is 
\begin{align}
    \label{eq:CMDP}
    \pi^\star =& arg\max_\pi\; J_r(\pi):=\mathbb{E}_{\tau \sim \pi}\left[ \sum_{t=0}^{T-1} \gamma^t r(s_t, a_t)\right]  \nonumber \\
    & \text{s.t.}\;\; \mathbb{E}_{s\sim\rho_0}\left[ Q^\pi_{c_i}(s, \pi(s)) \right] \leq b_i, \;\ \forall i
\end{align}
where $J_r(\pi)$ denotes the  the performance of the policy $\pi$ and $\mathbb{E}_{s\sim\rho_0}\left[ Q^\pi_{c_i}(s, \pi(s)) \right] \leq b_i$ is the $i$-th constraint. Imitation learning does not rely on environmental rewards but rather on demonstrations (trajectories). This work focuses on an offline setting, where the policy is learned solely from pre-collected trajectories that do not contain reward and cost information explicitly, i.e., $\tau = (s_0, a_0, \dots, s_{T-1}, a_{T -1}, s_T)$. In this work, non-preferred and union trajectories are defined as follows:
\begin{definition}[Non-preferred Trajectory Dataset]
    A non-preferred trajectory dataset contains trajectories that achieve high return but violate multiple constraints by a large margin. In other words, it is a high-return, high-cost trajectory dataset.
\end{definition}
\begin{definition}[Union Trajectory Dataset]
    A union trajectory dataset contains trajectories that achieve high return with varying degrees of constraint violation. In other words, it is a high-return, varying-cost trajectory dataset.
\end{definition}

We assume we have access to a limited number of non-preferred trajectories, $\mathcal{D}_N$, and a large number of union trajectories, $\mathcal{D}_U$. Additionally, we represent the policy that generated the union trajectory dataset as $\pi_U$. This means the actions in the trajectory, $\tau \in \mathcal{D}_U$, are sampled from $\pi_U$.  

\subsection{Contrastive Learning}
The procedure starts with a set of $N$ randomly sampled labeled examples, $\{(x_i, y_i)\}_{i=1}^N$. For each sample, we generate two random augmentations, resulting in a batch of $2N$ augmented pairs: $\mathcal{B} = \{(\tilde{x}_i, \tilde{y}_i)\}_{i=1}^{2N}$, where for each $k \in \{1, 2, \dots, N\}$, the augmented inputs $\tilde{x}_{2k-1}$ and $\tilde{x}_{2k}$ are derived from $x_k$, and both share the same label: $\tilde{y}_{2k-1} = \tilde{y}_{2k} = y_k$. This batch of $2N$ pairs is then used to train an encoder $f(\cdot)$ that maps input $x$ to a $d$-dimensional unit-norm representation vectors $z = f(x) \in \mathbb{R}^d$.

\textbf{Supervised Contrastive Training} \cite{supcon_neurips_2020}: We train the encoder model $f$ using a batch $\mathcal{B}$ of $2N$ augmented labeled sample pairs. The training objective of the supervised contrastive loss is defined as:
\begin{align} 
    \label{eq:contrastive_supervised}
    \mathcal{L}^{sup} = \sum_{i \in \mathcal{B}} \dfrac{-1}{|P(i)|} \sum_{p \in P(i)} \log \dfrac{\exp(z_i^\top z_p / \eta)}{\sum_{b \in \mathcal{B} \setminus {i}} \exp(z_i^\top z_b / \eta)} 
\end{align}
Here, $z_i = f(\tilde{x}_i)$ is the encoded representation of the augmented input $\tilde{x}_i$, $P(i) = \{p \in \mathcal{B} \setminus {i} \mid \tilde{y}_p = \tilde{y}_i\}$ is the set of all positive samples in the batch that share the same label as $\tilde{x}_i$, $|P(i)|$ denotes its cardinality, and $\eta \in \mathbb{R}^+$ is a temperature scaling parameter. 
This contrastive loss enables the model to learn label-aware representations by pulling together samples from the same class in the embedding space, thereby promoting class-consistent representations.


\section{Methodology}
This section introduces \our method which formulates safe policy learning as a CMDP problem, defined in Equation \ref{eq:CMDP}. In general, both the reward and safety cost functions are difficult to define, and in our work, we do not assume access to them. Instead, we assume access to a limited number of non-preferred trajectories $\mathcal{D}_N$ and union trajectories $\mathcal{D}_U$. In Section \ref{sec:cost_action_value_function}, we first approximate the safety cost functions using a parameterized cost model that estimates the likelihood of a state-action pair being non-preferred and use it to learn the cost action-value function. Finally, in Section \ref{sec:policy_learning} we provide a lower bound on policy’s performance, $J_r(\pi)$, that does not depend on reward labels and solves the resulting optimization problem to obtain a policy that effectively balances safety and performance.

\subsection{Learning Cost Action-Value Function}
\label{sec:cost_action_value_function}
To learn a cost action-value function, we first need to learn a cost model that effectively captures the safety cost information. We define our parameterized cost model as a composition $\tilde{c}:= g \circ f$, where $f: \mathcal{S}\times \mathcal{A} \rightarrow \mathbb{R}^d$ is an encoder model that maps state-action pairs to a $d$-dimensional unit-norm latent representation, and $g: \mathbb{R}^d \rightarrow [0,1]$ is a linear model that maps the encoded representation to a scalar value (See Figure~\ref{fig:method_cost_learning}).

\textbf{Loss function for cost model:} The encoder model $f$ is trained such that the state-action pairs that appear in the same trajectory have similar representations, thereby encouraging the model to capture temporal dependencies. We use partial trajectory of length $H << T$, rather than full trajectory of length $T$. This has two benefits: (1) short partial trajectories better capture local temporal correlations, as nearby state-action pairs tend to be more similar; and (2) shorter sequences reduce computational overhead during training. Given a batch of trajectories from the union dataset $\mathcal{B}_U \sim \mathcal{D}_U$, and the non-preferred dataset $\mathcal{B}_N \sim \mathcal{D}_N$, we train the encoder using a contrastive loss:
\begin{align}
    \label{eq:cost_contrastive}
    \mathcal{L}^{\text{cont}}_{\text{cost}} =& \sum_{\tau \in \mathcal{B}_U \cup\mathcal{B}_N} \sum_{t \in \tau} \dfrac{-1}{|\tau| -1} \sum_{t' \in \tau \setminus t} \log \dfrac{\exp(z^\top_t z_{t'}/\eta)}{\sum_{\tau \in \mathcal{B}_U \cup \mathcal{B}_N} \sum_{k \in \tau} \exp(z^\top_t z_{k}/\eta)}
\end{align}
Here, $t\in \tau$ denotes a state-action pair $(s_t,a_t)$ in trajectory $\tau$, $z_t = f(s_t, a_t)$ is its encoded representation, and $|\cdot|$ represents the cardinality. Minimizing $\mathcal{L}^{\text{cont}}_{\text{cost}}$ ensures that the state-action pairs in the same trajectory are temporally close. 

The output of the encoder model is then passed to the linear model $g$ that estimates the likelihood of the state-action pair being non-preferred. Given a trajectory $\tau$ we define the total discounted cost of a trajectory as $C(\tau) = \sum_{t=0}^{T-1} \gamma^t \tilde{c}(s_t, a_t)$ where $\tilde{c}:= g \circ f$ is our cost model. Since the non-preferred trajectories typically incur higher costs than those from the union dataset, we introduce a preference-based training objective to encourage the cost-model to assign higher costs to non-preferred trajectory $\tau_N$ than to union trajectory $\tau_U$, i.e., $C(\tau_N) > C(\tau_U)$. To formalize this preference, we define the loss function based on the Bradley–Terry model \cite{bradley_terry_1952} as:
\begin{align} 
    p_{\text{non}} &= p(\tau_N \succ \tau_U) = \dfrac{\exp(C(\tau_N))}{\exp(C(\tau_N)) + \exp(C(\tau_U))} \\\nonumber\\
    \label{eq:cost_hard} 
    \mathcal{L}^{\text{pref}}_{\text{cost}} &= \mathbb{E}_{\tau_N, \tau_U} \left[ -\log\; p_{\text{non}} \right] = \mathbb{E}_{\tau_N, \tau_U} \left[\text{BCE}(p_{\text{non}}; \mathbbm{1})\right]     
\end{align}
Here, $p_{\text{non}}$ represents the prediction probability that $\tau_N$ has higher cost over $\tau_U$, and $\text{BCE}(\cdot,\cdot)$ denotes binary cross-entropy loss function with a hard target label as $\mathbbm{1}$, indicating a strong supervision signal that $\tau_N$ should have higher cost than $\tau_U$. While this hard labeling assumption may not always strictly hold, since some of the trajectories in the union dataset are high-cost, we observe in our experiments that using this hard labeling does not degrade the performance. This is likely because, on average, trajectories sampled from the union dataset tend to have lower cost than those from the non-preferred dataset, making the pairwise ranking signal reliable in practice. 

Finally, we jointly train our cost model by minimizing both the contrastive and the preference-based loss function:
\begin{align}
    \label{eq:cost_loss}
    \mathcal{L}_{\text{cost}} = \mathcal{L}^{\text{pref}}_{\text{cost}} + \mathcal{L}^{\text{cont}}_{\text{cost}}
\end{align}

\textbf{Loss function for cost action-value:} We train our cost value function $Q^\pi_c: \mathcal{S}\times\mathcal{A} \rightarrow \mathbb{R}$ for a given policy $\pi$, using a learned cost model over the union dataset. The cost action-value function $Q^\pi_{\tilde{c}}(s, a)$ represents the expected long-term cost incurred by taking action $a$ in state $s$ and subsequently following policy $\pi$. To estimate $Q^\pi_c$, we iteratively minimize the following squared temporal-difference (TD) loss:
\begin{align}
    \label{eq:cost_action_value_loss}
    \mathcal{L}_{\text{value}} = \mathbb{E}_{(s,a,s')\sim \mathcal{D}_U} \left[ (y_{\text{target}}(s,a,s') -  Q^\pi_{\tilde{c}}(s,a))^2 \right]
\end{align}
where $y_{\text{target}}(s, a,s') = {\tilde{c}}(s, a) + \gamma Q^{\pi-}_{\tilde{c}}(s', \pi(s'))$ is the TD target,  $\tilde{c}(s, a)$ is the immediate cost estimated from the learned cost model, and $Q^{\pi-}_{\tilde{c}}$ is the target Q network which is slowly updated using polyak averaging scheme $Q^{\pi-}_{\tilde{c}} \leftarrow (1-\zeta) Q^{\pi-}_{\tilde{c}} + \zeta Q^{\pi}_{\tilde{c}}$ with a fixed smoothing coefficient $\zeta \in (0,1)$.

\begin{figure*}[th!]
    \centering
    \includegraphics[width=0.8\linewidth]{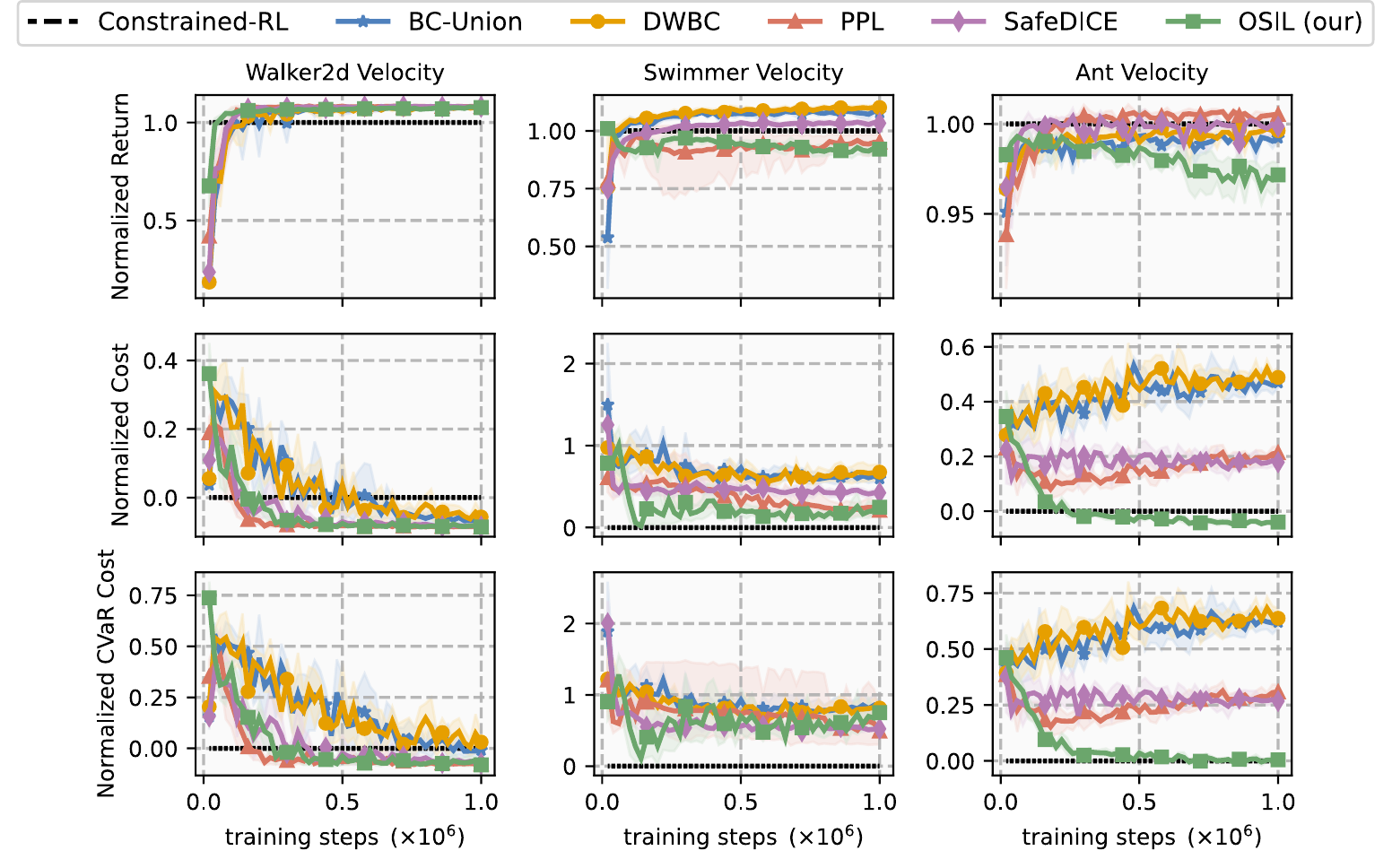}
    \caption{\textbf{Performance Comparison.} Experimental results on Walker2d-Velocity, Swimmer-Velocity, Ant-Velocity task. The shaded area represents the standard error. In velocity-constrained tasks, our method is able to recover safer policies without compromising reward performance.}
    \label{fig:velocity_task_performance}
\end{figure*}

\subsection{Learning Safe Policy}\label{sec:policy_learning}
For a given cost action-value function we can rewrite the reinforcement learning problem for CMDP as:
\begin{align}
    \label{eq:policy_learn_reward_performance}
    arg\max_{\pi} &\;J_r(\pi) := \mathbb{E}_{\tau \sim \pi}\left[ \sum_{t=0}^{T-1} \gamma^t r(s_t, a_t) \right] \nonumber \\
    &\text{s.t.}\;\; \mathbb{E}_{s\sim\rho_0}\left[Q^\pi_{\tilde{c}}(s, \pi(s)) \right] \leq b
\end{align}
Solving this constrained optimization is challenging, as we do not have access to the reward information. However, we know that the union dataset $\mathcal{D}_U$ contains high-return trajectories. We derive a lower bound on the performance of the policy $\pi$ in terms of the performance of the policy $\pi_U$ that generated $\mathcal{D}_U$. 
\begin{theorem}
    \label{theorem:bc_bound}
    Let $\pi_U$ represents the policy that generated union dataset. Define $\epsilon = \max_{s,a} |Q^\pi_r(s,a) - V^\pi_r(s)|$, and $D^{\text{max}}_{\text{KL}}(\pi_U,\pi) = \max_s D_{\text{KL}}(\pi_U(.|s) || \pi(.|s))$. Then the following bound holds:
    \begin{align}
        \label{eq:bc_bound}
        J_r(\pi) \geq J_r(\pi_U) - \dfrac{2\epsilon}{1-\gamma}\sqrt{D_{\text{KL}}^{\text{max}}(\pi_U, \pi)} 
    \end{align}
\end{theorem}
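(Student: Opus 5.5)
The plan is to use the performance difference lemma with the roles arranged so that the advantage of $\pi$ (not of $\pi_U$) is averaged over the state distribution of $\pi_U$. Let $d^{\pi_U}$ be the normalized discounted state visitation distribution of $\pi_U$, and $A^\pi_r(s,a)=Q^\pi_r(s,a)-V^\pi_r(s)$. The performance difference lemma (Kakade--Langford), which holds for the discounted objective $J_r$, gives
\begin{align*}
J_r(\pi_U)-J_r(\pi)=\frac{1}{1-\gamma}\,\mathbb{E}_{s\sim d^{\pi_U}}\Big[\mathbb{E}_{a\sim\pi_U(\cdot|s)}A^\pi_r(s,a)\Big].
\end{align*}
In the finite-horizon version with $T$ steps the factor $\frac{1}{1-\gamma}$ is replaced by the unnormalized sum of $\gamma^t$. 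That sum is at most $\frac{1}{1-\gamma}$, so the final bound is unchanged. I would state this in a footnote-level remark and otherwise argue in the discounted setting.

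The key observation is that $\mathbb{E}_{a\sim\pi(\cdot|s)}A^\pi_r(s,a)=0$. Hence, for each $s$,
\begin{align*}
\mathbb{E}_{a\sim\pi_U}A^\pi_r(s,a)=\sum_a\big(\pi_U(a|s)-\pi(a|s)\big)A^\pi_r(s,a)\le \epsilon\,\big\|\pi_U(\cdot|s)-\pi(\cdot|s)\big\|_1 ,
\end{align*}
using $|A^\pi_r|\le\epsilon$ by the definition of $\epsilon$. Writing the $\ell_1$ norm as twice the total variation distance, we get $\|\cdot\|_1=2D_{TV}$. Pinsker's inequality then gives $D_{TV}\le\sqrt{D_{KL}/2}$, and we bound this further by $\sqrt{D^{\max}_{KL}(\pi_U,\pi)}$. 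Taking the maximum over $s$ removes the dependence on $d^{\pi_U}$, and we obtain
\begin{align*}
J_r(\pi_U)-J_r(\pi)\le\frac{2\epsilon}{1-\gamma}\sqrt{D^{\max}_{KL}(\pi_U,\pi)} .
\end{align*}
Rearranging yields the statement of Theorem~\ref{theorem:bc_bound}. Pinsker actually gives the sharper constant $\sqrt{2}\,\epsilon$; we drop the $1/\sqrt2$ to match the stated form.

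The main obstacle is getting the direction of the performance difference lemma right. The advantage has to be that of $\pi$, since that is the quantity $\epsilon$ controls, and it has to be evaluated under $\pi_U$'s actions and states. Any other arrangement would require bounding the advantage of $\pi_U$ or a distribution shift term. Two further points need care. The first is the finite-horizon/discount bookkeeping noted above. The second is the KL direction: Pinsker is symmetric in total variation, so $D_{KL}(\pi_U\|\pi)$ is a valid choice, and it is the direction that later yields a behavior-cloning (log-likelihood) objective.
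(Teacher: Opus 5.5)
Your proof is correct and follows the paper's argument: the performance difference lemma with the advantage of $\pi$ evaluated along $\pi_U$, then a per-state bound of $2\epsilon D_{\text{TV}}(\pi_U(\cdot|s)\,\|\,\pi(\cdot|s))$, then Pinsker with the factor $\tfrac12$ discarded, then a maximum over states; the only real difference is that you get the per-state bound by a direct H\"older estimate after subtracting $\mathbb{E}_{a\sim\pi}A^\pi_r(s,a)=0$, whereas the paper uses a TRPO-style coupling lemma, and both give the same constant. Your finite-horizon aside is too quick, because with horizon $T$ the advantage becomes time-indexed and $\epsilon$ defined from the horizon-$T$ functions $Q^\pi_r, V^\pi_r$ need not control it for $t>0$; however, the paper's proof also works with the infinite discounted sum, so your main argument is unaffected.
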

The proof for the theorem is in Appendix \ref{appendix:performance_bound_proof}. Since $\pi_U$ is assumed to produce high-return trajectories, $J_r(\pi_U)$ serves as a proxy for the maximum achievable performance. 

The objective in Equation \ref{eq:policy_learn_reward_performance} can be optimized by maximizing the lower bound on $J_r(\pi)$. This leads to the following reformulation of the objective equation \ref{eq:policy_learn_reward_performance} as:
\begin{align}
    arg\max_\pi \;J_r(\pi) \;\succcurlyeq\;& arg\max_\pi  \; J_r(\pi_U) - C \sqrt{D_{\text{KL}}^{\text{max}}(\pi_U, \pi)} \nonumber \\
    \aeq\; &J_r(\pi_U) -  arg\max_\pi \; C \sqrt{D_{\text{KL}}^{\text{max}}(\pi_U, \pi)} \nonumber \\
    \label{eq:bc_bound_max_kl}
    =\;& arg\min_\pi \; D_{\text{KL}}^{\text{max}}(\pi_U, \pi)
\end{align}
Here, $C = \dfrac{2\epsilon}{1 - \gamma}$, and step (a) follows from the fact that $J_r(\pi_U)$ is the maximum achievable performance. The equation \ref{eq:bc_bound_max_kl} imposes constraint that the KL divergence is bounded at every point in the state space. While this formulation is theoretically grounded, it is often impractical to solve due to the large number of constraints. To make the problem tractable, we approximate the constraint using the average KL divergence.
\begin{align}
    \label{eq:avg_kl}
    \mathbb{E}_{s\sim \mathcal{D}_U}\left[ D_{\text{KL}}(\pi_U || \pi) \right] &=\;\mathbb{E}_{s\sim \mathcal{D}_U}\left[ \mathbb{E}_{a\sim \pi_U(.| s)} \left[\log \dfrac{\pi_U(a|s)}{\pi(a|s)} \right] \right] \nonumber \\
    &=\; \mathbb{E}_{(s,a)\sim \mathcal{D}_U} \left[\log \dfrac{\pi_U(a|s)}{\pi(a|s)} \right]
\end{align}
In Appendix \ref{appendix:avg_kl}, we discuss the implication of this approximation. The CMDP problem, Equation \ref{eq:CMDP}, is then reformulated as:
\begin{align}
    \label{eq:policy_learn_constrained}
    arg\min_\pi &-\mathbb{E}_{(s,a)\sim \mathcal{D}_U} \left[\log \pi(a|s) \right] \nonumber\\
    &\text{s.t.}\;\;  \mathbb{E}_{s\sim\rho_0,a\sim\pi}\left[ Q^\pi_{\tilde{c}}(s, a) \right] \leq b
\end{align} 
We relax the constrained optimization problem Equation \ref{eq:policy_learn_constrained} using the Lagrangian formulation. 
We introduce a penalty coefficient $\alpha$ and solve the following unconstrained optimization problem:
\begin{align}
    \label{eq:policy_learn_relaxed}
    \mathcal{L}_{\text{policy}} &=\; -\mathbb{E}_{(s,a)\sim \mathcal{D}_U} \left[\log \pi(a|s) \right] + \; \mathbb{E}_{s\sim\rho_0}\left[\highlight{\alpha} Q^\pi_{\tilde{c}}(s, \pi(s)) \right] \nonumber\\
    &=\; \underbrace{-\mathbb{E}_{(s,a)\sim \mathcal{D}_U} \left[\log \pi(a|s) \right]}_{\text{Behavior Cloning}} + \; \underbrace{\mathbb{E}_{(s_0,a_0)\sim\mathcal{D}_U}\left[\highlight{\alpha} Q^\pi_{\tilde{c}}(s_0, \pi(s_0)) \right]}_{\text{Cost Critic}} 
\end{align}
In Equation \ref{eq:policy_learn_relaxed}, the loss function consists of two components: a behavior cloning term that encourages the policy $\pi$ to imitate actions from the high-return dataset $\mathcal{D}_U$, and a cost-critic term that penalizes risky behavior by minimizing the cost-action value $Q^\pi_{\tilde{c}}$. This formulation resembles TD3+BC \cite{td3_plus_bc_neurips_2021}, an offline RL algorithm that also combines behavior cloning with value function for policy learning. However, a key distinction lies in the critic: TD3+BC uses a reward action-value function to guide the policy toward high-return actions, while our approach employs a cost-action value function to promote safe actions.

The hyperparameter $\alpha$, in equation \ref{eq:policy_learn_relaxed}, balances the trade-off between safety (minimizing cost-action value) and performance (maximizing reward through imitation). We choose an adaptive scalar $\alpha$ that balances this trade-off:
\begin{align}
    \label{eq:adaptive_alpha}
    \alpha = \dfrac{\bar{\alpha}}{\dfrac{1}{N}\sum_{(s_0,a_0)\sim \mathcal{D}_U} \exp\left(Q^\pi_{\tilde{c}}(s_0,a_0) - Q^\pi_{\tilde{c}}(s_0,\pi(s))\right)}
\end{align}
This adaptive formulation ensures that $\alpha$ increases when the learned policy selects actions that are, on average, more costly than those in the dataset, i.e., $Q^\pi_{\tilde{c}}(s_0,a_0) < Q^\pi_{\tilde{c}}(s_0,\pi(s))$. In such cases, the policy learns safe behavior by placing a higher weight on the cost-critic term. Conversely, when the policy consistently chooses safer actions, i.e., $Q^\pi_{\tilde{c}}(s_0,a_0) \geq Q^\pi_{\tilde{c}}(s_0,\pi(s))$, the value of $\alpha$ decreases, reducing the regularization and allowing the policy to focus more on high-return behavior.


\section{Experiments}
\begin{figure*}[th!]
    \centering
    \includegraphics[width=0.8\linewidth]{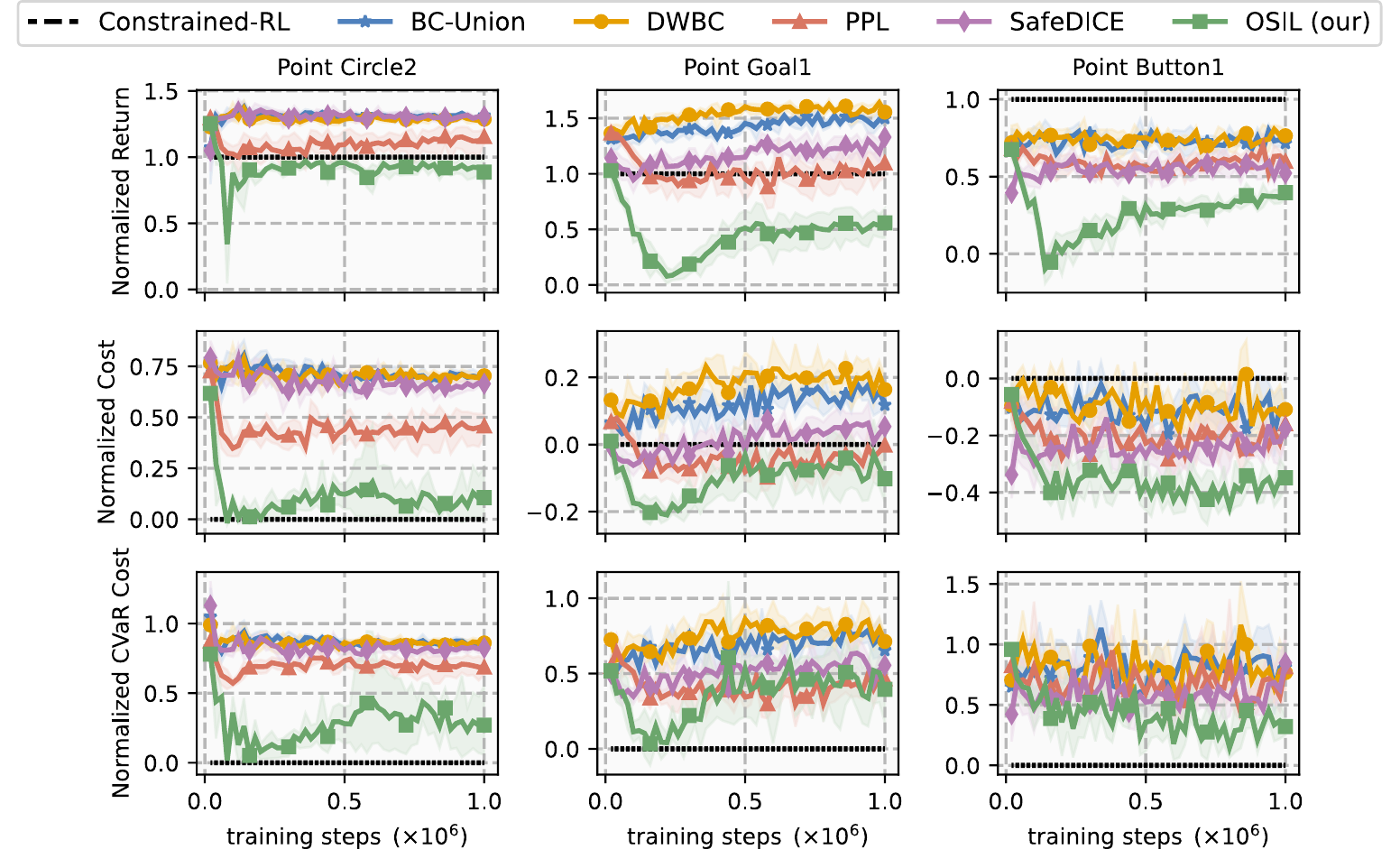}
    \caption{\textbf{Performance Comparison.} Experimental results on Point-Circle2, Point-Goal1, Point-Button1 tasks. The shaded area represents the standard error. Similar to the results in Figure \ref{fig:velocity_task_performance}, our method is able to recover safer policy compared to other baselines.}
    \label{fig:point_task_performance}
\end{figure*}

We evaluate our algorithm against the state-of-the-art offline safe IL method using the Datasets for Offline Safe RL (DSRL) across a suite of benchmark tasks \cite{offline_safe_rl_dataset_jmlr_2024}. Specifically, we run experiments on: (i) MuJoCo-based velocity-constrained tasks (Walker-Velocity, Swimmer-Velocity, and Ant-Velocity), where agents are required to move as fast as possible while respecting velocity limits; and (ii) navigation tasks (Point-Circle2, Point-Goal1, and Point-Button1), where agents aim to maximize performance while avoiding collisions and hazardous areas (See Appendix \ref{appendix:enironment_details} for further details). Together, we conduct experiments on six tasks, providing a comprehensive scenario to assess the safety and effectiveness of offline safe IL algorithms. 

We select a limited number of non-preferred (i.e., high-return-high-cost behavior) trajectories, $\mathcal{D}_N$, and a large number of high-return varying cost union trajectories, $\mathcal{D}_U$ from the DSRL dataset. 
We removed all the reward and cost annotations from the non-preferred ($\mathcal{D}_N$) and the union ($\mathcal{D}_U$) trajectory dataset. Our goal is to recover low-cost behavior while preserving high-reward performance. Additional details about the $\mathcal{D}_N$ and $\mathcal{D}_U$ datasets for each environment are provided in Appendix \ref{appendix:dataset_details}.

\noindent\textbf{Baselines:} We compare our algorithm against three baselines. \textbf{(1) BC-Union}, a BC policy on the union dataset $\mathcal{D}_U$. This serves as a baseline to evaluate the performance of BC when the dataset contains high-return and varying cost trajectories. \textbf{(2) DWBC}, a variant of the discriminator-weighted behavior cloning algorithm \cite{dwbc_icml_2022} that uses \textit{negative-unlabeled} learning to train the discriminator model. The trained discriminator is then used as weight in the weighted BC loss function. \textbf{(3) PPL} \cite{trex_icml_2019,b_pref_neurips_2021,offline_pref_tmlr_2023,pebble_icml_2021}, a preference-based method that learns a reward function to prefer union trajectories over non-preferred ones. The learned reward function is then used as a weight in the weighted BC loss function. \textbf{(4) SafeDICE} \cite{safedice_neurips_2023}, method directly estimates the stationary distribution corrections for the low-cost behavior and then trains a weighted BC policy. Implementation details for all the baseline algorithms are provided in Appendix \ref{appendix:implementation_details}.

Additionally, we train an offline constrained RL policy, Constrained Decision Transformer \cite{cdt_icml_2023}, using the union dataset augmented with ground-truth reward and cost annotations. In our work, we use Constrained-RL method, to estimate the safe performance under complete information. We compare \our, which operates under stricter information constraints, to assess how close its performance is to the Constrained-RL approach. Constrained-RL method is not a baseline. No other methods in our experiments have access to reward and cost information.

We evaluate and compare ours and other baselines algorithms in terms of task performance (measured by expected episodic return) and safety (measured by expected episodic cost). We report the results with the following metrics: \textbf{(1) Normalized Return}: scales the mean episodic return of a given policy, such that ``$0$'' represents the episodic return from a random policy, while ``$1$'' represents the episodic return achieved by the Constrained-RL policy. \textbf{(2) Normalized Cost}: scales the given policy's mean episodic cost, such that ``$0$'' represents the cost achieved by the Constrained-RL policy, and ``$1$'' represents the maximum episodic cost. \textbf{(3) Normalized Conditional Value at Risk performance (CVaR) 20\% Cost}: similar to \textit{Normalized Cost} it scales the policy's mean episodic cost of the worst 20\% runs.

Our primary focus in the evaluation is on safety, measured by \textit{Normalized Cost} and \textit{Normalized CVaR Cost}. We use \textit{Normalized Return} to examine whether the agents can achieve high-return behaviors. We consider a policy to have successfully recovered the low-cost behavior if its performance closely matches the Constrained-RL policy. This is evidenced by the \textit{Normalized Cost} and \textit{Normalized CVaR Cost} less than or close to ``$0$'', and the \textit{Normalized Return} being greater than or close to ``$1$''. All plots are generated by averaging the performance of 50 trajectories generated from the learned policy. To assess statistical significance, we generate 1000 bootstrap samples from the data, using results from 5 different random seeds, and we plot the resulting 95\% confidence intervals.

We answer the following questions through our experiments:
\begin{enumerate}
    \item \textit{Performance Comparison}: How does our algorithm perform relative to other baselines in learning safe policies that satisfy cost constraints while preserving high reward?
    \item \textit{Impact of Non-Preferred Trajectory Dataset Size}: How does performance vary with the size of the non-preferred trajectory dataset $\mathcal{D}_N$?
    \item \textit{Impact of Contrastive Loss}: What is the effect of removing contrastive loss from the cost model training?
    \item \textit{Sensitivity to Trajectory Length}: How does the choice of sub trajectory length affect the algorithm’s performance?
\end{enumerate}

\begin{figure*}[t!]
    \centering
    \includegraphics[width=0.83\linewidth]{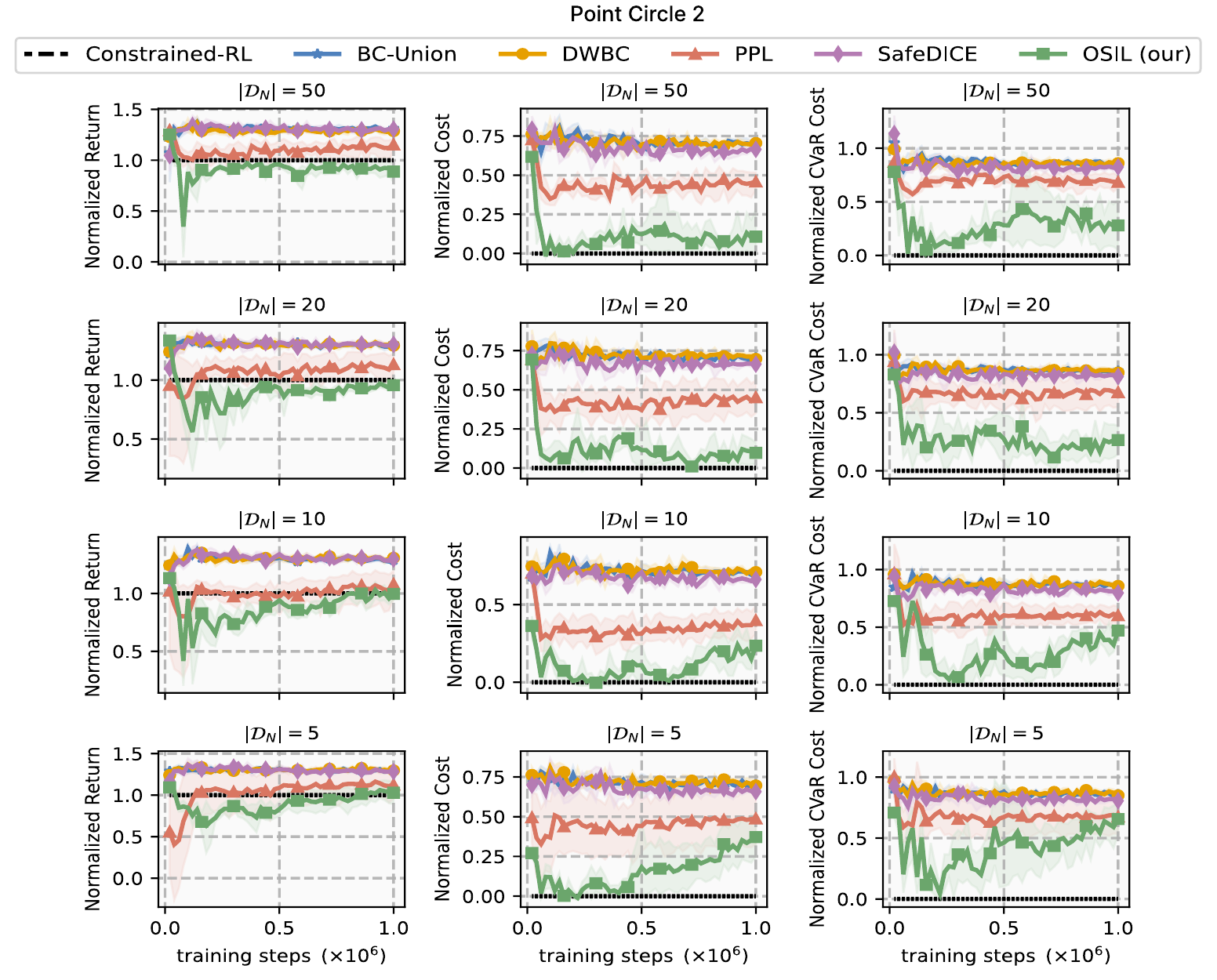}
    \caption{\textbf{Impact of Non-Preferred Trajectory Dataset Size.} Experimental result on Point-Circle2 with varying non-preferred trajectory dataset size $\mathcal{D}_N=\{5, 10, 20, 50\}$. We observe the performance gradually decreases with smaller $|\mathcal{D}_N|$. However, our approach consistently outperforms all baselines across all different dataset size.}
    \label{fig:non_preferred_dataset}
\end{figure*}

\subsection{Performance Comparison}
In both the velocity-constrained tasks (Figure \ref{fig:velocity_task_performance}) and navigation tasks (Figure \ref{fig:point_task_performance}), we observe that the standard BC policy trained on the union dataset can recover high-reward behavior. However, the resulting policy also incurs a high cost, which is expected, as the union dataset contains trajectories with varying cost levels. Our algorithm, \our, effectively recovers low-cost behavior from the union dataset in both velocity-constrained and navigation tasks, achieving safety performance comparable to that of the Constrained RL policy. In the Walker2d velocity task, most baselines succeed in recovering low-cost behavior. We find that SafeDICE performs similarly to the BC-Union policy across most domains. This can be due to its strong assumption that the union dataset should contain only low-cost and high-cost trajectories, which is a stringent requirement. On the other hand, PPL utilizes the fact that the union trajectories are generally less costly than the non-preferred ones. It learns a reward model that prefers union over non-preferred trajectories, thereby capturing some notion of safety in the state-action space. As a result, PPL outperforms SafeDICE and BC-Union but still underperforms compared to our method. 

\subsection{Ablation and Sensitivity Analysis}
\begin{figure*}[!ht]
    \centering
    \includegraphics[width=0.78\linewidth]{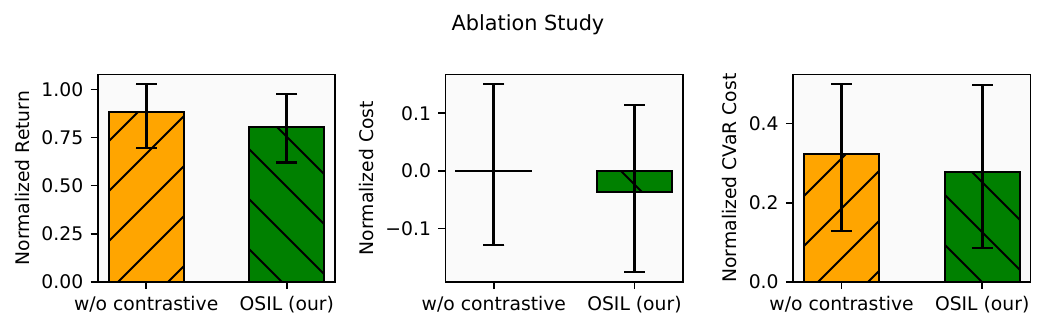}
    \caption{\textbf{Impact of Contrastive loss.} We report the mean performance of the algorithm after 1 million training steps, aggregated across both velocity-constrained and navigation tasks. Mean and 95\% CIs over 5 seeds. Our ablation highlights the significance of the contrastive loss in overall performance. See Appendix \ref{appendix:ablation} for per-task performance.}
    \label{fig:ablation_study}
\end{figure*}

\begin{figure*}[t!]
    \centering
    \includegraphics[width=0.78\linewidth]{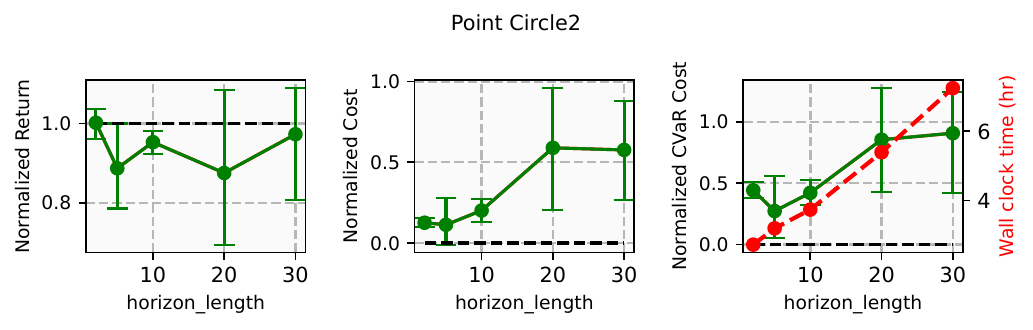}
    \caption{\textbf{Sensitivity to Trajectory Length.} We report the final mean performance of the algorithm on Point Circle2 environment for different partial trajectory length $H=\{2, 5, 10, 20, 30\}$, after training for 1 million steps. Mean and 95\% CIs over 5 seeds. We observe that the shorter trajectory length tends to better capture temporal correlations and are computationally more efficient (see rightmost plot, the red curve indicates wall-clock training time in hours).}
    \label{fig:sensitivity_hz}
\end{figure*}
\textbf{Impact of Non-Preferred Trajectory Dataset Size: } We studied the impact of non-preferred dataset size on our algorithm by varying $|\mathcal{D}_N| \in \{5, 10, 20, 50\}$ in the Point-Circle2 environment. As shown in Figure \ref{fig:non_preferred_dataset}, the performance gradually decreases with smaller $|\mathcal{D}_N|$, indicating the importance of non-preferred examples. We observe that when $|\mathcal{D}_N|$ is large, the non-preferred behaviors are well represented, we are able to recover low-cost behavior. However, as $|\mathcal{D}_N|$ decreases, the coverage of these non-preferred behaviors also decreases, making it harder for OSIL and other baselines to learn safe behavior. Nonetheless, our approach consistently outperforms all baselines across all different dataset sizes. \\

\noindent\textbf{Impact of Contrastive Loss:} To evaluate the impact of the contrastive loss term in our \our algorithm, we performed experiments on all navigational and velocity-constrained tasks. All other design choices were held constant, and we removed the contrastive loss term, $\mathcal{L}^{\text{cont}}_{\text{cost}}$, for comparison. We trained both variants of \our with and without the contrastive loss for 1 million training steps. The final performance results were aggregated across all six tasks, as shown in Figure \ref{fig:ablation_study}. The results show that the contrastive loss significantly aids in learning lower-cost policies without compromising overall return. \\

\noindent\textbf{Sensitivity to Trajectory Length:} To analyze the effect of different partial trajectory lengths $H$ on our algorithm, we performed experiments on the Point-Circle2 environment for different values of $H = \{2, 5, 10, 20, 30\}$. Figure \ref{fig:sensitivity_hz} summarizes the final performance results. We observe that shorter trajectories can effectively capture temporal correlations, which facilitates learning a better cost function and improves overall performance. Our finding also supports using partial trajectories for training, as they offer a computationally efficient alternative to full-length trajectories without compromising overall performance.

\subsection{Additional Ablations and Discussions}

We study the impact of the union dataset size (Appendix \ref{appendix:union_data_size}) and the Lagrangian penalty coefficient (Appendix \ref{appendix:lagrangian_penalty}) on our algorithm's performance. We also compare the learned cost model against the ground-truth cost and observe that our learned cost model is able to recover the true cost function for most environments (Appendix \ref{appendix:ground_truth_cost}). Furthermore, we demonstrate that \our is robust to label noise in the non-preferred dataset $\mathcal{D}_N$, i.e., when some trajectories in $\mathcal{D}_N$ do not correspond to high-cost behaviors (Appendix \ref{appendix:nosiy_non_preferred}). 
In Appendix \ref{appendix:avg_kl}, we show that minimizing the maximum KL divergence can be well approximated by minimizing the average KL divergence, Equation \ref{eq:avg_kl}, provided the union dataset $\mathcal{D}_U$ has sufficient state coverage under policy $\pi_U$. Overall \our is able to learn robust safer policy compared to other baselines.


\section{Conclusion}
\label{sec:conclusion_limitation}
This work introduces an offline safe imitation learning algorithm that learns a safe policy by inferring safety from non-preferred trajectories. We formulate the safe policy learning as a CMDP. Since we do not have access to reward and cost information, we reformulate the CMDP problem by deriving a lower bound of the reward maximizing objective and approximating the safety constraints by learning a cost model that estimates the likelihood of non-preferred behavior. We empirically demonstrate that our method can learn safer policies in the constrained RL benchmarks. A key limitation of our work is the assumption that the union dataset $\mathcal{D}_U$ contains large number of high-return trajectories. Relaxing this requirement is an important direction for future work.



\begin{acks}
Returaj Burnwal acknowledges financial support from 
the Prime Minister’s Research Fellowship (PMRF), Ministry of Education, India.
\end{acks}





\clearpage
\appendix
\onecolumn

\section{Performance Bound Proof}
\label{appendix:performance_bound_proof}

We first state all the lemmas we will need to prove Theorem \ref{theorem:bc_bound} and then provide the proof. We start by stating the Lemma 6.1 from \citet{approx_rl}, which shows that the difference in policy performance $J_r(\pi_U),\;J_r(\pi)$ can be decomposed as a sum of per-timestep advantages. We prove the lemma here for completeness. 
\begin{lemma}[\citet{approx_rl}]
    \label{lemma:policy_relation}
    For any policies $\pi_U$ and $\pi$,
    \begin{equation}
        J_r(\pi_U) = J_r(\pi) + \mathbb{E}_{\tau\sim \pi_U}\left[\sum_{t=0}^\infty \gamma^t A^\pi_r(s_t, a_t) \right]
    \end{equation}
    where $A^\pi_r(s_t, a_t) = Q^\pi_r(s_t,a_t) - V^\pi_r(s_t)$ is the advantage function for policy $\pi$.
\end{lemma}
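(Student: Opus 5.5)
The plan is the standard performance difference lemma argument of Kakade and Langford. First I fix the infinite-horizon discounted setting, so that $J_r(\pi) = \mathbb{E}_{s_0\sim\rho_0}[V^\pi_r(s_0)]$ and $Q^\pi_r(s,a) = r(s,a) + \gamma\,\mathbb{E}_{s'\sim\mathcal{T}(\cdot|s,a)}[V^\pi_r(s')]$. The core idea is to expand $\sum_t \gamma^t A^\pi_r(s_t,a_t)$ along a trajectory of $\pi_U$, substitute the Bellman identity for $Q^\pi_r$, and observe that the value terms telescope.

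Concretely, take $\tau\sim\pi_U$, meaning $s_0\sim\rho_0$, $a_t\sim\pi_U(\cdot|s_t)$ and $s_{t+1}\sim\mathcal{T}(\cdot|s_t,a_t)$. The first step is to use the tower property, conditioning on $(s_t,a_t)$, to write
\begin{equation*}
\mathbb{E}_{\tau\sim\pi_U}\big[A^\pi_r(s_t,a_t)\big] = \mathbb{E}_{\tau\sim\pi_U}\big[r(s_t,a_t) + \gamma V^\pi_r(s_{t+1}) - V^\pi_r(s_t)\big].
\end{equation*}
This is valid because the conditional law of $s_{t+1}$ under $\pi_U$ is exactly $\mathcal{T}(\cdot|s_t,a_t)$, which is the same kernel that appears inside $Q^\pi_r$. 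Only the dynamics enter here, not the policy.

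Next I multiply by $\gamma^t$ and sum over $t$. The terms $\sum_t \gamma^{t+1}V^\pi_r(s_{t+1}) - \sum_t\gamma^t V^\pi_r(s_t)$ telescope to $-V^\pi_r(s_0)$. What remains is
\begin{equation*}
\mathbb{E}_{\tau\sim\pi_U}\Big[\sum_t\gamma^t r(s_t,a_t)\Big] - \mathbb{E}_{s_0\sim\rho_0}\big[V^\pi_r(s_0)\big] = J_r(\pi_U) - J_r(\pi).
\end{equation*}
Rearranging this gives the stated identity.

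The only delicate step is justifying the exchange of the infinite sum with the expectation, and the vanishing of the tail term $\gamma^{N}V^\pi_r(s_N)$ as $N\to\infty$. I would handle this by assuming bounded rewards, so that $|V^\pi_r|\le R_{\max}/(1-\gamma)$. Then dominated convergence applies to the partial sums and the tail term goes to zero.

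If the finite horizon $T$ from the preliminaries is taken literally, I would instead run the same telescoping up to $T-1$ with $V^\pi_r(s_T)=0$ at termination. In that case time-indexed value functions are needed, which I would note as a convention and otherwise ignore, matching the lemma's use of $\sum_{t=0}^\infty$.
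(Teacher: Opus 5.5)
Your proposal is correct and follows essentially the same route as the paper: substitute $Q^\pi_r(s_t,a_t) = r(s_t,a_t) + \gamma V^\pi_r(s_{t+1})$ inside the expectation over $\pi_U$-trajectories, telescope the value terms down to $-V^\pi_r(s_0)$, and identify what remains as $J_r(\pi_U) - J_r(\pi)$. The tower-property justification of that substitution and the bounded-reward argument for swapping sum and expectation and killing the tail $\gamma^N V^\pi_r(s_N)$ are details the paper leaves implicit.
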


\begin{proof}
    \begin{align}
        \mathbb{E}&_{\tau\sim \pi_U}\left[\sum_{t=0}^\infty \gamma^t A^\pi_r(s_t, a_t) \right] \\
        &= \mathbb{E}_{\tau\sim \pi_U}\left[\sum_{t=0}^\infty \gamma^t \left(Q^\pi_r(s_t,a_t) - V^\pi_r(s_t)\right) \right]\\
        &= \mathbb{E}_{\tau\sim \pi_U}\left[\sum_{t=0}^\infty \gamma^t \left(r(s_t,a_t) + \gamma V^\pi_r(s_{t+1}) - V^\pi_r(s_t)\right) \right]\\
        &= \mathbb{E}_{\tau\sim \pi_U}\left[\sum_{t=0}^\infty \gamma^t r(s_t,a_t) + \sum_{t=0}^\infty \left(\gamma^{t+1} V^\pi_r(s_{t+1}) - \gamma^t V^\pi_r(s_t)\right) \right]\\
        &\aeq \mathbb{E}_{\tau\sim \pi_U}\left[-V^\pi_r(s_0) + \sum_{t=0}^\infty \gamma^t r(s_t,a_t) \right]\\
        &= \mathbb{E}_{\tau\sim \pi_U}\left[\sum_{t=0}^\infty \gamma^t r(s_t,a_t) \right] - \mathbb{E}_{s_0}\left[-V^\pi_r(s_0)\right]\\\nonumber\\
        &= J_r(\pi_U) - J_r(\pi)
    \end{align}

    where step (a) rearranging terms in the summation via telescoping. 
\end{proof}

\noindent Based on Proposition 4.7 from \citet{levin2009markov}, we can define
\begin{definition} 
    Let $(\pi_U, \pi \mid s)$ be an $\alpha$-coupled policy pair, where $\alpha$ is the total variation (TV) divergence between the two policies at state $s$, i.e., $\alpha=D_{TV}(\pi_U(s) || \pi(s))$. Then, there exists a joint distribution over actions $(a_U, a)|s$ such that the probability of disagreement satisfy, $P(a_U \neq a|s) \leq \alpha$, where $\pi_U(s)$ and $\pi(s)$ are the marginals of $a_U$ and $a$.
\end{definition}

\noindent Similar to Lemma 2 from \citet{trpo_icml_2015}, we can bound the advantage term based on the above definition:
\begin{lemma}[\citet{trpo_icml_2015}]
    \label{lemma:adv_bound}
    Given that $\pi_U$, $\pi$ are $\alpha$-coupled policy pair, for all state $s$ and $\max_{(s,a)} |A_r^\pi(s,a)| = \epsilon$, then
    \begin{equation}
        \mathbb{E}_{a_U \sim \pi_U | s}[A^\pi_r(s,a_U)] \leq 2\alpha \epsilon
    \end{equation}
\end{lemma}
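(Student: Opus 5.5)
The key observation is that the advantage of $\pi$ has zero mean under $\pi$'s own action distribution: $\mathbb{E}_{a\sim\pi|s}[A^\pi_r(s,a)] = \mathbb{E}_{a\sim\pi|s}[Q^\pi_r(s,a)] - V^\pi_r(s) = 0$. So I would start by rewriting the quantity in Lemma~\ref{lemma:adv_bound} as a difference of expectations over a coupling.

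Fix a state $s$. Take the $\alpha$-coupled pair $(\pi_U,\pi\mid s)$ from the definition above. It provides a joint distribution over $(a_U,a)$ with marginals $\pi_U(\cdot|s)$ and $\pi(\cdot|s)$, and with $P(a_U\neq a\mid s)\le\alpha$. Using the zero-mean identity,
\begin{align*}
\mathbb{E}_{a_U\sim\pi_U|s}[A^\pi_r(s,a_U)] &= \mathbb{E}_{(a_U,a)}\left[A^\pi_r(s,a_U) - A^\pi_r(s,a)\right].
\end{align*}
The integrand vanishes on the event $\{a_U=a\}$. So the expectation equals
\begin{align*}
P(a_U\neq a\mid s)\,\mathbb{E}_{(a_U,a)}\left[A^\pi_r(s,a_U) - A^\pi_r(s,a)\,\middle|\, a_U\neq a\right].
\end{align*}

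On the disagreement event, $|A^\pi_r(s,a_U) - A^\pi_r(s,a)|\le |A^\pi_r(s,a_U)|+|A^\pi_r(s,a)|\le 2\epsilon$, since $\max_{(s,a)}|A^\pi_r(s,a)|=\epsilon$. Multiplying by $P(a_U\neq a\mid s)\le\alpha$ gives the bound $2\alpha\epsilon$ for every $s$. The same argument in fact controls the absolute value, which is convenient later when the lemma is summed over time inside Lemma~\ref{lemma:policy_relation}.

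The only delicate point is the existence of the coupling with disagreement probability at most the TV distance; this is exactly the maximal coupling of Proposition 4.7 in \citet{levin2009markov}, which the definition already packages. The conditional-expectation step needs $P(a_U\neq a\mid s)>0$; otherwise the bound is trivially $0$. To avoid dividing by zero, I would phrase the step instead as $\mathbb{E}[\mathbbm{1}\{a_U\neq a\}(A^\pi_r(s,a_U)-A^\pi_r(s,a))]$, which sidesteps the issue. Everything else is routine.
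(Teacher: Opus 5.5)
Your proof is correct and follows essentially the same route as the paper. Both arguments use the zero-mean identity $\mathbb{E}_{a\sim\pi|s}[A^\pi_r(s,a)]=0$ to rewrite the quantity as an expectation of $A^\pi_r(s,a_U)-A^\pi_r(s,a)$ under the coupling, restrict to the disagreement event, and bound the result by $2\epsilon$ times $P(a_U\neq a\mid s)\le\alpha$. Your indicator-function phrasing also cleanly covers the case $P(a_U\neq a\mid s)=0$, which the paper's conditional-expectation step glosses over, and the two-sided bound you mention is a harmless strengthening, since the theorem only uses the one-sided version.
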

\begin{proof}
    \begin{align}
        \mathbb{E}&_{a_U \sim \pi_U | s}[A^\pi_r(s,a_U)] \\
        &\aeq \mathbb{E}_{(a_U,a) \sim (\pi_U,\pi | s)}[A^\pi_r(s,a_U) - A^\pi_r(s,a)]\\
        &= P(a_U \neq a | s) \mathbb{E}_{(a_U,a) \sim (\pi_U,\pi | s) | a_U\neq a} [A^\pi_r(s,a_U) - A^\pi_r(s,a)] \\
        &\leq 2\alpha \max_{(s,a)}|A^\pi_r(s,a)| = 2\alpha\epsilon = 2 \epsilon D_{TV}(\pi_U(s) || \pi(s))
    \end{align}
    
    where step (a) $\mathbb{E}_{a \sim \pi|s}[A^\pi_r(s,a)] = 0$
\end{proof}

\noindent Based on the work of \citet{pollard2000asymptopia}, we define the relationship between total variation divergence and KL divergence as:
\begin{lemma}[\citet{pollard2000asymptopia}]
    \label{lemma:tv_kl}
    Let $\pi_U(s)$ and $\pi(s)$ be two probability distributions defined over the same measurable space, then
    \begin{equation}
        D_{\text{TV}}(\pi_U(s)||\pi(s))^2 \leq D_{\text{KL}}(\pi_U(s)||\pi(s))
    \end{equation}
\end{lemma}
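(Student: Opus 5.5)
We will prove the bound with $D_{\text{TV}}(P\|Q)=\sup_{A}|P(A)-Q(A)|=\tfrac12\sum_a|P(a)-Q(a)|$, writing $P=\pi_U(s)$ and $Q=\pi(s)$. This is the convention under which the coupling definition above (disagreement probability at most $\alpha$) holds with equality for the optimal coupling. The plan is to establish Pinsker's inequality $D_{\text{TV}}(P\|Q)^2\le \tfrac12 D_{\text{KL}}(P\|Q)$. The claimed bound then follows immediately, since $\tfrac12 D_{\text{KL}}\le D_{\text{KL}}$ because KL divergence is nonnegative. We may assume $P\ll Q$, since otherwise $D_{\text{KL}}=\infty$ and there is nothing to prove.

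\textbf{Step 1: reduction to two points.} Let $A=\{a: P(a)>Q(a)\}$; in the continuous case, replace the probabilities by densities with respect to a common dominating measure. Then $D_{\text{TV}}(P\|Q)=P(A)-Q(A)$. Set $p=P(A)$ and $q=Q(A)$. By the data-processing inequality for KL, which follows from the log-sum inequality applied separately on $A$ and on $A^c$, we have
\begin{equation*}
D_{\text{KL}}(P\|Q)\ \ge\ p\log\frac{p}{q}+(1-p)\log\frac{1-p}{1-q}=: k(p,q).
\end{equation*}
It therefore suffices to show $k(p,q)\ge 2(p-q)^2$ for all $p,q\in[0,1]$.

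\textbf{Step 2: the binary inequality.} We fix $p$ and define $\phi(q)=k(p,q)-2(p-q)^2$ for $q\in(0,1)$. Clearly $\phi(p)=0$. Differentiating gives
\begin{equation*}
\phi'(q)=-\frac{p}{q}+\frac{1-p}{1-q}+4(p-q)=(q-p)\Big(\frac{1}{q(1-q)}-4\Big).
\end{equation*}
Since $q(1-q)\le\tfrac14$, the second factor is nonnegative. Hence $\phi$ is nonincreasing for $q\le p$ and nondecreasing for $q\ge p$, so its minimum is $\phi(p)=0$. The endpoint cases $q\in\{0,1\}$ are handled by continuity or by the convention $k=\infty$. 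This step carries the real content of the proof, but the calculus is routine once the derivative factors as above; spotting that factorization is the only real obstacle.

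Combining the two steps gives
\begin{equation*}
D_{\text{TV}}(P\|Q)^2=(p-q)^2\le \tfrac12 k(p,q)\le \tfrac12 D_{\text{KL}}(P\|Q)\le D_{\text{KL}}(P\|Q),
\end{equation*}
which is the stated lemma.

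If one instead used the unnormalized $L^1$ convention $\sum_a|P(a)-Q(a)|$ for $D_{\text{TV}}$, the same argument would only give $D_{\text{TV}}^2\le 2D_{\text{KL}}$. So the half-$L^1$ normalization, which is the one consistent with the coupling definition, is essential to the statement.
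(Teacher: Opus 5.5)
Your proposal is correct and follows the same route as the paper: establish Pinsker's inequality $D_{\text{TV}}^2 \le \tfrac12 D_{\text{KL}}$, then drop the factor $\tfrac12$ using nonnegativity of KL. The paper simply cites Pinsker, whereas you prove it via the standard two-point reduction (log-sum inequality) and the binary calculus argument. You also make explicit the half-$L^1$ normalization of $D_{\text{TV}}$, which the paper leaves implicit but needs for its coupling-based advantage bound.
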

\begin{proof}
    Pinsker’s inequality states:
    \begin{equation}
        D_{\text{TV}}(\pi_U(s)||\pi(s)) \leq  \sqrt{\dfrac{1}{2} D_{\text{KL}}(\pi_U(s)||\pi(s))}
    \end{equation}
    
    Squaring both side gives:
    \begin{equation}
        D_{\text{TV}}(\pi_U(s)||\pi(s))^2 \leq \dfrac{1}{2} D_{\text{KL}}(\pi_U(s)||\pi(s)) \leq D_{\text{KL}}(\pi_U(s)||\pi(s))
    \end{equation}
\end{proof}

\noindent We now restate Theorem \ref{theorem:bc_bound} for completeness and proceed with its proof.
\begin{theorem*}[1]
    Let $\pi_U$ represents the policy that generated union dataset. Define $$\epsilon = \max_{s,a} |Q^\pi_r(s,a) - V^\pi_r(s)|,\; \text{and}\; D^{\text{max}}_{\text{KL}}(\pi_U,\pi) = \max_s D_{\text{KL}}(\pi_U(.|s) || \pi(.|s))$$
    Then the following bound holds:
    \begin{align}
        J_r(\pi) \geq J_r(\pi_U) - \dfrac{2\epsilon}{1-\gamma}\sqrt{D_{\text{KL}}^{\text{max}}(\pi_U, \pi)} 
    \end{align}
\end{theorem*}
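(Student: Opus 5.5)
The plan is to chain the three lemmas stated above. I start from Lemma \ref{lemma:policy_relation}, which rewrites the quantity of interest exactly as
\begin{equation*}
J_r(\pi) = J_r(\pi_U) - \mathbb{E}_{\tau\sim\pi_U}\Bigl[\sum_{t=0}^{\infty}\gamma^t A^\pi_r(s_t,a_t)\Bigr].
\end{equation*}
The whole task therefore reduces to upper bounding the discounted expected advantage of $\pi$ along trajectories generated by $\pi_U$.

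First, I rewrite the trajectory expectation as a sum over time of state expectations. Conditioning on $s_t$ gives
\begin{equation*}
\mathbb{E}_{\tau\sim\pi_U}\Bigl[\sum_t \gamma^t A^\pi_r(s_t,a_t)\Bigr] = \sum_t \gamma^t\, \mathbb{E}_{s_t\sim\pi_U}\bigl[\mathbb{E}_{a\sim\pi_U(\cdot|s_t)} A^\pi_r(s_t,a)\bigr].
\end{equation*}
Second, for each fixed state I apply Lemma \ref{lemma:adv_bound}, using the coupling from the definition with $\alpha_s = D_{TV}(\pi_U(s)\|\pi(s))$. This gives $\mathbb{E}_{a\sim\pi_U|s}[A^\pi_r(s,a)] \le 2\epsilon\, D_{TV}(\pi_U(s)\|\pi(s))$. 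Here $\epsilon$ is exactly the quantity $\max_{s,a}|Q^\pi_r - V^\pi_r|$ from the statement. Third, I bound the state-dependent TV by its maximum over states, $D_{TV}(\pi_U(s)\|\pi(s)) \le \max_{s'} D_{TV}(\pi_U(s')\|\pi(s'))$. By Lemma \ref{lemma:tv_kl}, this is at most $\sqrt{D^{\max}_{\mathrm{KL}}(\pi_U,\pi)}$, using monotonicity of the square root to move the max inside. Since the resulting per-step bound $2\epsilon\sqrt{D^{\max}_{\mathrm{KL}}}$ no longer depends on the state distribution, summing $\sum_t\gamma^t = 1/(1-\gamma)$ yields the claimed penalty $\frac{2\epsilon}{1-\gamma}\sqrt{D^{\max}_{\mathrm{KL}}(\pi_U,\pi)}$. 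Substituting this back into the identity from Lemma \ref{lemma:policy_relation} gives the inequality of Theorem \ref{theorem:bc_bound}.

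The main subtlety is reconciling the horizons. $J_r$ is defined with a finite horizon $T$, while Lemma \ref{lemma:policy_relation} uses an infinite sum. I would handle this either by treating the finite-horizon problem as an infinite-horizon one with absorbing zero-reward terminal states, or by noting that truncating gives $\sum_{t<T}\gamma^t \le 1/(1-\gamma)$, so the bound only loosens. A second point to check is the sign convention in Lemma \ref{lemma:adv_bound}. Only an upper bound on $\mathbb{E}_{\pi_U}[A^\pi_r]$ is needed, which is precisely what that lemma provides, so the direction of the inequality is consistent.
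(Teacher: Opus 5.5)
Your proposal is correct and follows the paper's proof essentially step for step: Lemma~\ref{lemma:policy_relation} reduces the claim to the discounted advantage under $\pi_U$, then come Lemma~\ref{lemma:adv_bound} per state, a maximum over states, Lemma~\ref{lemma:tv_kl}, and the geometric factor $1/(1-\gamma)$, the only (cosmetic) difference being that you apply Pinsker before summing rather than after. The paper works with infinite sums throughout and never reconciles them with the finite-$T$ definition of $J_r$, so your horizon remark is extra care beyond the paper; note, however, that truncating the sum alone does not fix it, because the telescoping in Lemma~\ref{lemma:policy_relation} needs $Q^\pi_r(s,a)=r(s,a)+\gamma\,\mathbb{E}[V^\pi_r(s')]$, which fails for stationary $T$-step value functions, so a genuine finite-horizon version needs your absorbing-state construction with time in the state (time-indexed $Q^\pi_r$, $V^\pi_r$), and then $\epsilon$ must also be maximized over the time index.
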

\begin{proof}
    We begin by bounding the discounted sum of advantage term:
    \begin{align}
        \mathbb{E}_{\tau\sim \pi_U} & \left[\sum_{t=0}^\infty \gamma^t A^\pi_r(s_t, a_t) \right] \\
        \leq &\; \mathbb{E}_{\tau\sim \pi_U} \left[\sum_{t=0}^\infty \gamma^t 2\epsilon D_{\text{TV}}(\pi_U(s_t)||\pi(s_t))\right] \;\;\;\;\; \because \text{using Lemma \ref{lemma:adv_bound}} \\
        \leq &\;  \mathbb{E}_{\tau\sim \pi_U} \left[\sum_{t=0}^\infty \gamma^t\right] 2 \epsilon\max_{s} D_{\text{TV}}(\pi_U(s)||\pi(s))\\
        = &\; \dfrac{2\epsilon}{1-\gamma} \max_{s} D_{\text{TV}}(\pi_U(s)||\pi(s) \\
        \leq &\; \dfrac{2\epsilon}{1-\gamma} \max_s \sqrt{D_{\text{KL}}(\pi_U(s)||\pi(s)} \;\;\;\;\;\;\;\;\;\;\;\;\;\; \because \text{using Lemma \ref{lemma:tv_kl}} \\
        = &\;  \dfrac{2\epsilon}{1-\gamma} \sqrt{D_{\text{KL}}^{\text{max}}(\pi_U, \pi)}
    \end{align}

    Now, using Lemma \ref{lemma:policy_relation}, we can bound the policy performance as:
    \begin{align}
        J_r(\pi_U) - J_r(\pi) =&\; \mathbb{E}_{\tau\sim \pi_U}\left[\sum_{t=0}^\infty \gamma^t A^\pi_r(s_t, a_t) \right] \\
        \leq &\; \dfrac{2\epsilon}{1-\gamma} \sqrt{D_{\text{KL}}^{\text{max}}(\pi_U, \pi)}
    \end{align}
    
    Rearranging gives the desired inequality.
\end{proof}


\clearpage
\section{Environment Details}
\label{appendix:enironment_details}
We evaluated our algorithm on two categories of tasks: (i) MuJoCo-based velocity-constrained tasks: Walker-Velocity, Swimmer-Velocity, and Ant-Velocity (see Figure \ref{fig:velocity_task_env}); and (ii) navigation tasks: Point-Circle2, Point-Goal1, and Point-Button1 (see Figure \ref{fig:point_task_env}).

\begin{figure*}[h!]
    \centering
    \includegraphics[width=0.7\textwidth]{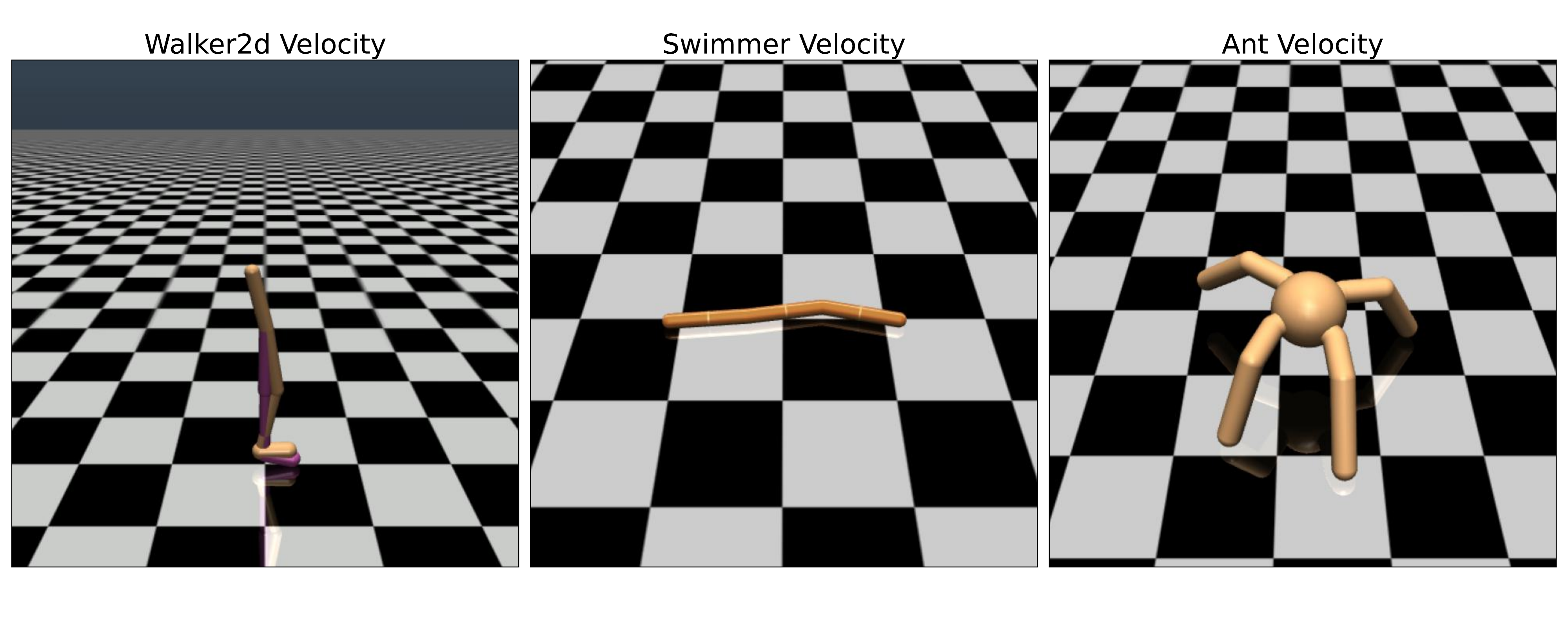}
    \caption{MuJoCo-based velocity-constrained tasks of DSRL environment. For each task, the agent needs to move as fast as possible while adhering to the velocity limits.}
    \label{fig:velocity_task_env} 
\end{figure*}
\begin{figure*}[th!]
    \centering
    \large 
    \includegraphics[width=0.7\textwidth]{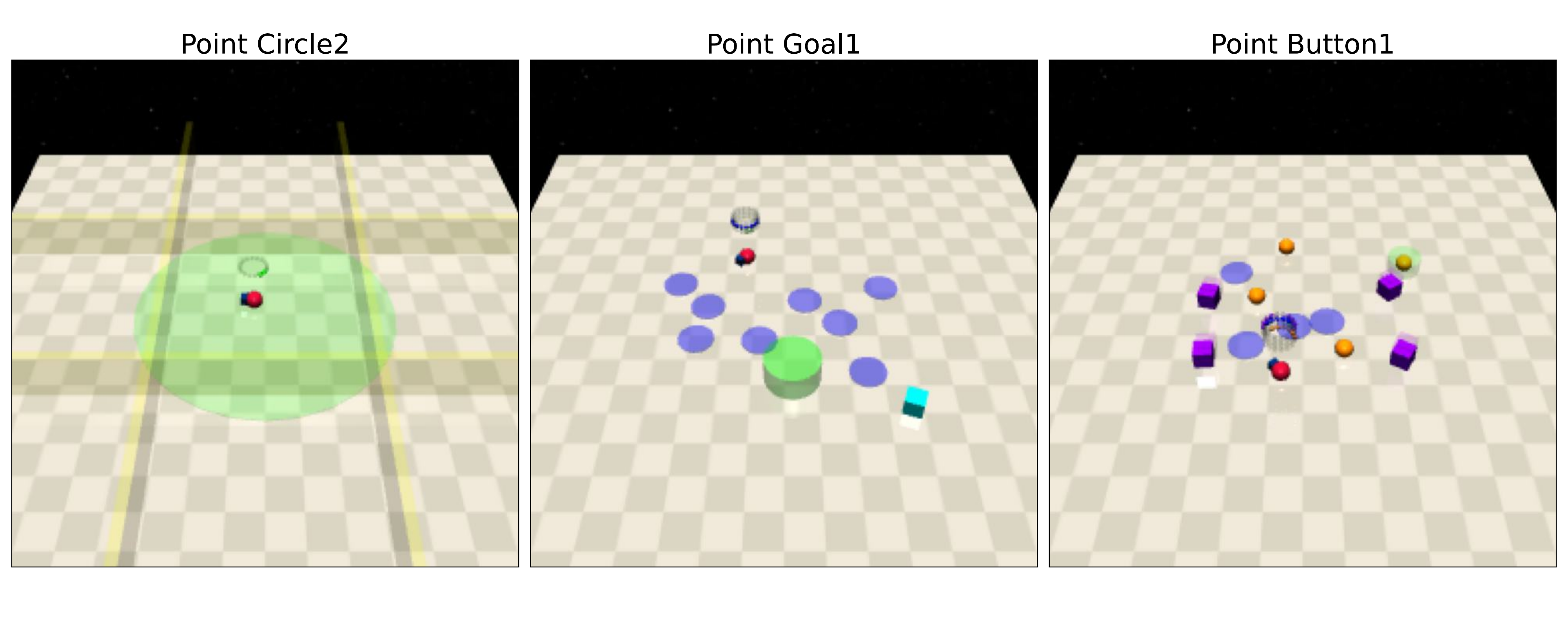}
    \caption{Navigation tasks of DSRL environment. For each task, the agent (red robot) receive costs when crossing the yellow boundaries, entering hazards indicated by blue circles or when touching the purple obstacles. The objectives of each task are as follows: \textbf{Point Circle2}: agent needs to circle around the center area as close to the boundaries for optimal reward, \textbf{Point Goal1}: Move to a series of green goal positions, \textbf{Point Button1}: Press a series of highlighted goal buttons.}
    \label{fig:point_task_env}
\end{figure*}


\clearpage
\section{Dataset Related Details}
\label{appendix:dataset_details}
To evaluate our approach, we focused on the velocity-constrained and navigation tasks from the DSRL dataset \cite{offline_safe_rl_dataset_jmlr_2024}. For union dataset we choose all the trajectories from the DSRL dataset that have total reward greater than 50\% irrespective of the trajectory's total cost. For non-preferred dataset we choose 50 randomly selected trajectories with top 30\% of total cost and total reward above 50\% 
Table \ref{table:task_specification} summarizes
the tasks, safety constraints, and dataset details for each domain used in our experiments.

\begin{table}[h!]
\begin{center}
\addtolength{\tabcolsep}{-3pt}
  \def\arraystretch{1.5}

\fontsize{9.0pt}{10.25pt}\selectfont
\captionsetup{justification=centering}
\caption{Task specification of each domain used in our experimental results}
\label{table:task_specification}

\begin{tabular}{@{}lcccccc@{}}
\toprule
                                 & \multicolumn{3}{c}{Velocity Constrained} & \multicolumn{3}{c}{Navigation}             \\ \midrule
Task Specification                        & Velocity     & Velocity    & Velocity    & Circle & Goal  & Button                    \\ 
Type of agent                             & Walker2d     & Swimmer     & Ant         & Point  & Point & Point                     \\ 
Difficulty Level                          & -            & -           & -           & 2      & 1     & 1                         \\ 
Mean cost of non-preferred trajectories   & 615.98       & 155.44      & 460.00      & 248.02 & 84.32 & 169.36                    \\ 
Mean cost of union trajectories           & 125.34       & 110.42      & 122.11      & 130.24 & 39.27 & 84.99                     \\ 
Mean return of non-preferred trajectories & 4006.76      & 202.05      & 2927.06     & 47.94  & 23.30 & \multicolumn{1}{l}{30.73} \\ 
Mean return of union trajectories         & 2810.06      & 157.82      & 2615.14     & 43.80  & 20.54 & \multicolumn{1}{l}{27.17} \\ \bottomrule
\end{tabular}
\end{center}
\end{table}


\clearpage
\section{Pseudocode of \our algorithm}
\label{appendix:pseudocode}

\begin{algorithm}[!h]
    \caption{\our}
    \label{alg:our_algorithm}
    \begin{algorithmic}[1]
        \Require Non-preferred dataset $\mathcal{D}_N$, union dataset $\mathcal{D}_U$, 
                 partial trajectory length $H$, $\eta$, $\bar{\alpha}$
        \Ensure Safe policy $\pi \approx \pi^\star$
        \State Initialize cost model $\tilde{c}$, cost action-value function $Q_{\tilde{c}}^\pi$, and policy network $\pi$
        \For{$n=1,2,\dots$}
            \State Sample bag $\mathcal{B}_N \sim \mathcal{D}_N$ and $\mathcal{B}_U \sim \mathcal{D}_U$
            \State Update cost function $\tilde{c}$ by optimizing Eq.~\ref{eq:cost_loss}
            \State Update $Q_{\tilde{c}}^\pi$ by optimizing Eq.~\ref{eq:cost_action_value_loss}
            \State Update policy $\pi$ by optimizing Eq.~\ref{eq:policy_learn_relaxed}
        \EndFor
    \end{algorithmic}
\end{algorithm}

\section{Implementation Details of the baseline algorithms}
\label{appendix:implementation_details}
In this section, we provide the implementation details of the baseline algorithms.

\subsection{BC-Union}
We train the behavior cloning \cite{bc} policy on union dataset. We minimize the following loss function:
\begin{align}
    \min_\pi -\mathbb{E}_{(s,a)\sim \mathcal{D}_U} [\log \pi(a|s)]
\end{align}

\subsection{DWBC}
We modified DWBC \cite{dwbc_icml_2022} and used negative-unlabeled learning to train the discriminator model as follows:
\begin{align}
    \min_d\; &\eta \mathbb{E}_{(s,a)\sim\rho^N} [-\log d(s,a, \log \pi)] \nonumber\\
    &+ \mathbb{E}_{(s,a)\sim\rho^U}[-\log(1-d(s,a,\log \pi))] \nonumber\\
    &- \eta\mathbb{E}_{(s,a)\sim\rho^N}[-\log(1-d(s,a,\log\pi))]
\end{align}
where $d$ is the discriminator model and $\eta$ is the hyperparameter. We then train safe policy by using this discriminator model to construct weight in the weighted BC loss function as follows:
\begin{align}
    \min_\pi \mathbb{E}_{(s,a)\sim\rho^U}[(1-d(s,a))\mathcal{L}_\pi(s,a)]
\end{align}

\subsection{PPL}
We train the reward function based on  Bradley-Terry model \cite{bradley_terry_1952}, which is as follows:
\begin{align}
    \min_r - \sum_{(i,j)\in \mathcal{P}} \log \dfrac{\exp\left( \sum_{(s,a)\in\tau_i} r(s,a) \right)}{\exp\left( \sum_{(s,a)\in\tau_i} r(s,a) \right) + \exp\left( \sum_{(s,a)\in\tau_j} r(s,a) \right)}
\end{align}
where $\tau_i$ and $\tau_j$ are trajectories and $\mathcal{P} = \{(i,j): \tau_i \succ \tau_j\}$. In our setting we assumed that union demonstrations are better than non-preferred demonstrations, i.e. $\tau_i \succ \tau_j \;\forall\; \tau_i \sim \mathcal{D}_U, \tau_j \sim \mathcal{D}_N$. Based on this reward function we train our safe policy as follows:
\begin{align}
    \min_\pi -\mathbb{E}_{(s,a)\sim\mathcal{D}_U}[r(s,a)\log\pi(a|s)]
\end{align}

\subsection{SafeDICE}
SafeDICE \cite{safedice_neurips_2023} algorithm first estimates the log ratio of $\rho^N(s,a)$ and $\rho^U(s,a)$ by training a discriminator model:
\begin{align}
    c^* = \text{arg}\max_c \mathbb{E}_{(s,a)\sim\mathcal{D}_N}[\log c(s,a)] + \mathbb{E}_{(s,a)\sim\mathcal{D}_U}[\log (1-c(s,a))]
\end{align}
The discriminator model $c^*$ is then used to compute the log ratio:
\begin{align}
    \text{log ratio} = r_\alpha(s,a) = \log \dfrac{1-(1+\alpha)c^*(s,a)}{(1-\alpha)(1-c^*(s,a))}
\end{align}
The log ratio estimate is then used to estimate $\nu$ network:
\begin{align}
    \min_\nu (1-\gamma)\mathbb{E}_{s\sim\rho_0}[\nu(s)] + \log\mathbb{E}_{(s,a,s')\sim\mathcal{D}_U}\left[\exp(A_\nu(s,a,s'))\right]
\end{align}
where $A_\nu(s,a,s') = r(s,a) + \gamma \nu(s')$. Then, the safe policy is estimated as follows:
\begin{align}
    \min_\pi - \dfrac{\mathbb{E}_{(s,a,s') \sim \mathcal{D}_U}[w_\nu(s,a,s')\mathcal{L}_\pi(s,a)]}{\mathbb{E}_{(s,a,s')\sim\mathcal{D}_U}[w_\nu(s,a,s')]}
\end{align}
where $w_\nu(s,a,s') = \exp(A_{\nu^*}(s,a,s'))$ and $\mathcal{L}_\pi$ denotes the BC loss function.

\noindent We use the author's implementation of SafeDICE as a baseline in our work: \url{https://github.com/jys5609/SafeDICE}

\subsection{Constrained Decision Transformer}
An offline constrained-RL algorithm, Constrained Decision Transformer (CDT) \cite{cdt_icml_2023}, requires access to both cost and reward annotation to train. We use CDT to compare our learned policies performance. We use the CDT implementation from OSRL \cite{offline_safe_rl_dataset_jmlr_2024} library. Github link: \url{https://github.com/liuzuxin/OSRL/tree/main}
 
\section{Hyperparameter configurations}
\label{appendix:hyperparameter}
For fair comparison, we use the same architecture and learning rate to train the policy, critic, reward, and cost models of each algorithm. Table \ref{table:hyperparameter} summarizes the hyperparameter configurations that we used in our experiments.
We use the same hyperparameters throughout our experiments, except where explicitly stated.

\begin{table}[!htb]
\centering
\caption{Hyperparameters used in our experimental results}
\label{table:hyperparameter}

\begin{tabular}{@{}cccccc@{}}
\toprule
\multirow{2}{*}{Hyperparameter}       & \multicolumn{5}{c}{Algorithm}                                                                                                                                                              \\ \cmidrule(l){2-6} 
                                      & BC-Union           & DWBC               & PPL                & SafeDICE           & \our (ours)                                                                             \\ \midrule\midrule
$\gamma$ (discount factor)            & 0.99               & 0.99               & 0.99               & 0.99               & 0.99                                                                                                   \\
learning rate (actor)                 & $1 \times 10^{-5}$ & $1 \times 10^{-5}$ & $1 \times 10^{-5}$ & $1 \times 10^{-5}$ & $1 \times 10^{-5}$                                                                                     \\
network size (actor)                  & $[256, 256]$       & $[256, 256]$       & $[256, 256]$       & $[256, 256]$       & $[256, 256]$                                                                                           \\
learning rate (cost)                  & -                  & -                  & -                  & $1 \times 10^{-5}$ & $1 \times 10^{-5}$                                                                                     \\
network size (cost)                   & -                  & -                  & -                  & $[256, 256]$       & $[256, 256, 128]$                                                                                      \\
learning rate (reward / critic model) & -                  & $1 \times 10^{-5}$ & $1 \times 10^{-5}$ & $1 \times 10^{-5}$ & $1 \times 10^{-5}$                                                                                     \\
network size (reward / critic model)  & -                  & $[256, 256]$       & $[256, 256]$       & $[256, 256]$       & $[256, 256]$                                                                                           \\
gradient penalty coefficient          & -                  & -                  & -                  & 10.0               & 1.0                                                                                                    \\
weight decay                          & 0.01               & 0.01               & 0.01               & 0.01               & 0.01                                                                                                   \\
$\eta$                                & -                  & 0.5                & -                  & -                  & 0.1                                                                                                    \\
$\bar{\alpha}$                        & -                  & -                  & -                  & -                  & \begin{tabular}[c]{@{}c@{}}0.01 (Velocity Tasks), \\ 0.005 (Navigation Tasks)\end{tabular} \\
trajectory length (H)                 & -                  & -                  & 5                  & -                  & 5                                                                                                     \\ 
batch size                            & 128                & 128                & 128                & 128                & 128                                                                                                    \\
\# training steps                     & $1,000,000$        & $1,000,000$        & $1,000,000$        & $1,000,000$        & $1,000,000$                                                                                            \\
optimizer                             & adam               & adam               & adam               & adam               & adam                                                                                                   \\ \bottomrule
\end{tabular}
\end{table}


\clearpage
\section{Additional Experiments}
\label{appendix:additional_experiments}
\subsection{Impact of contrastive loss}
\label{appendix:ablation}
In this section we report the results for all the environments in Figure \ref{fig:ablation_velocity}, \ref{fig:ablation_navigation}.
We observe that contrastive loss significantly aids in learning lower-cost policies except for Swimmer-Velocity task without compromising overall return.  

\begin{figure*}[h!]
    \centering
    \includegraphics[width=0.75\textwidth]{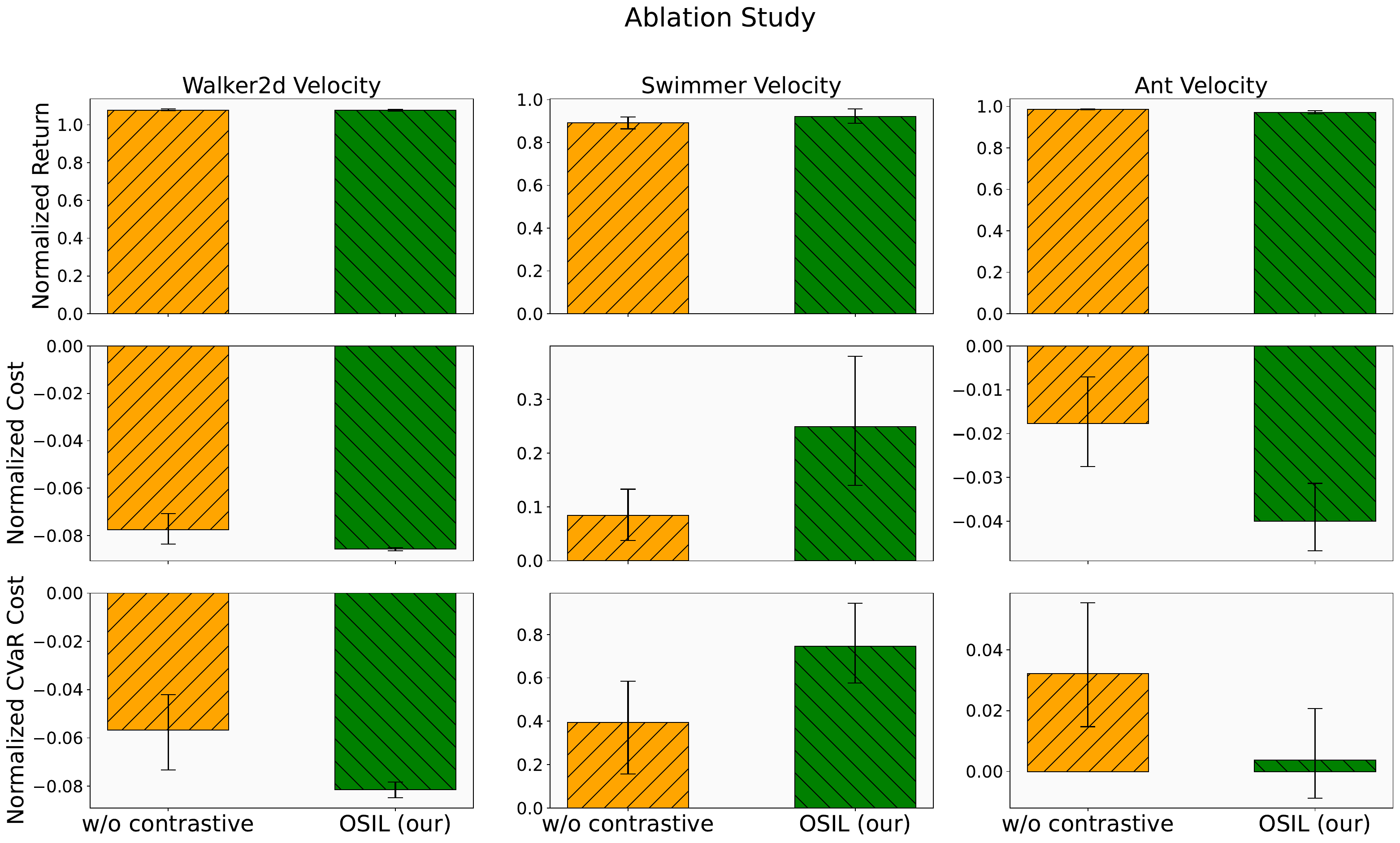}
    \caption{\textbf{Impact of Contrastive loss.} We report the mean performance of the algorithm after 1 million training steps for velocity constrained tasks. Mean and 95\% CIs over 5 seeds.}
    \label{fig:ablation_velocity}
\end{figure*}

\begin{figure*}[h!]
    \centering
    \includegraphics[width=0.75\textwidth]{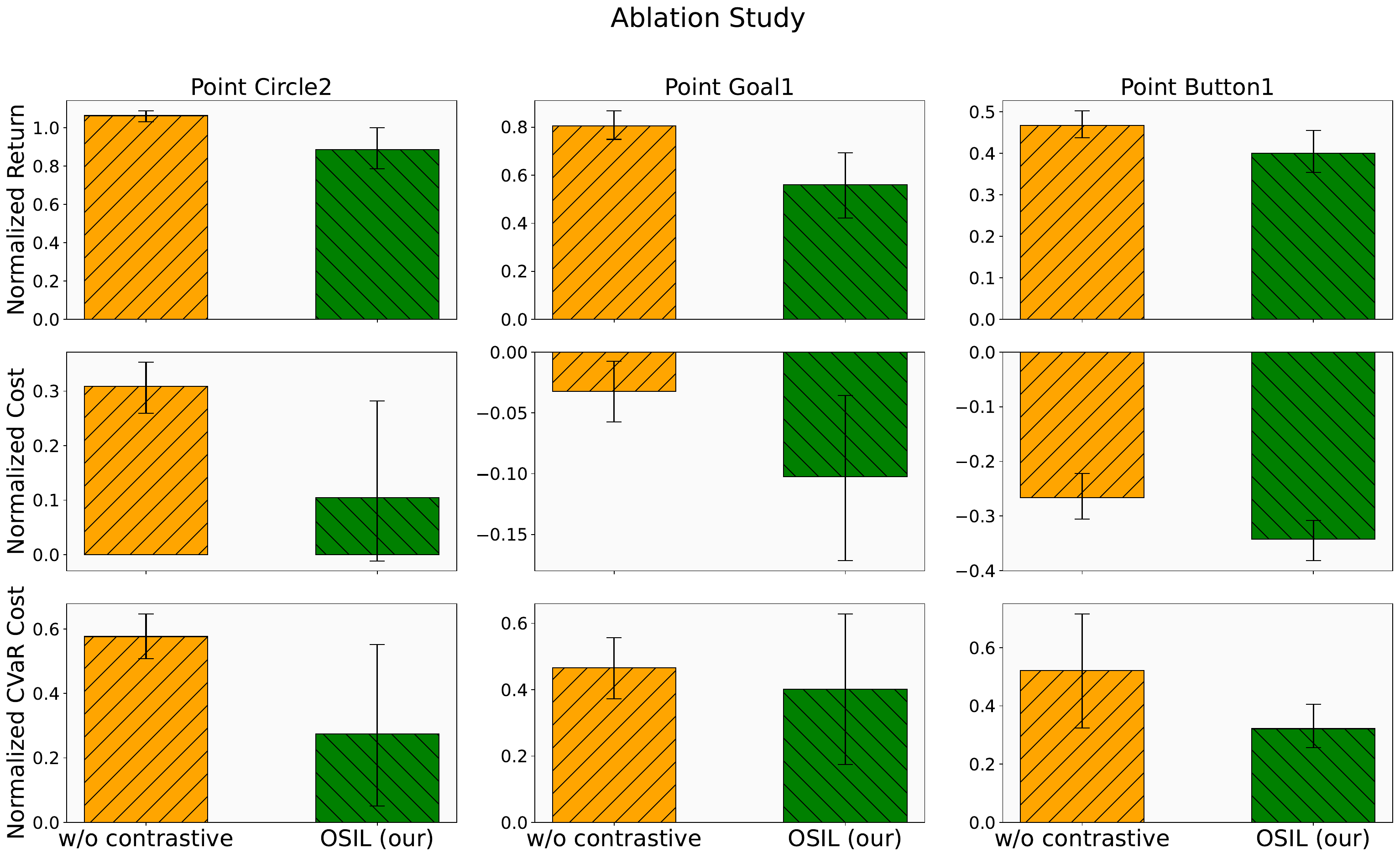}
    \caption{\textbf{Impact of Contrastive loss.} We report the mean performance of the algorithm after 1 million training steps for navigation tasks. Mean and 95\% CIs over 5 seeds.}
    \label{fig:ablation_navigation}
\end{figure*}

\subsection{Impact of Union dataset size}
\label{appendix:union_data_size}
We conducted experiment on Point-Circle2 environment to study the impact of $\mathcal{D}_U$ dataset size on the safety performance. We observe the performance gradually decreases with smaller $|\mathcal{D}_U|$. 
\begin{figure*}[th!]
    \centering
    \includegraphics[width=0.63\textwidth]{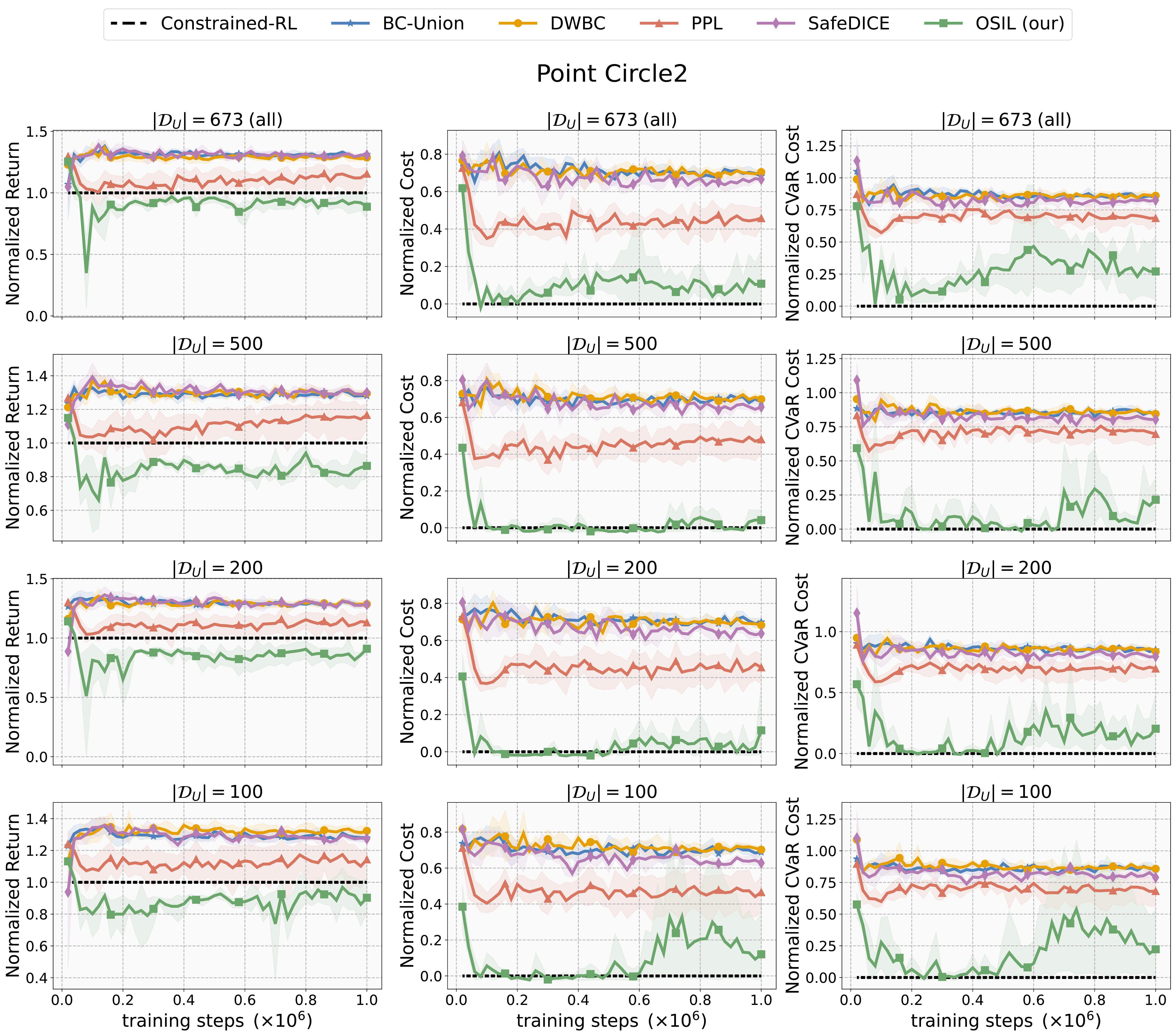}
    \caption{\textbf{Impact of Union Trajectory Dataset Size.} Experimental result on Point-Circle2 with varying union trajectory dataset size $|\mathcal{D}_U|=\{100, 200, 500, 673\; (\text{all})\}$ and $|\mathcal{D}_N| = 50$. Constrained-RL algorithm is trained on all 673 union trajectories. We observe the performance gradually decreases with smaller $|\mathcal{D}_U|$. However, our approach consistently outperforms all baselines across all different dataset size.}
\end{figure*}


\subsection{Sensitivity to Lagrangian Penalty Coefficient}
\label{appendix:lagrangian_penalty}
We conducted experiments under two different settings: (1) treating $\alpha$ in Equation \ref{eq:policy_learn_relaxed} as a fixed hyperparameter, and (2) using an adaptive version of $\alpha$ as defined in Equation \ref{eq:adaptive_alpha}, where $\bar{\alpha}$ serves as the hyperparameter. As shown in Figure \ref{fig:lagrangian_penalty}, while the adaptive approach using $\bar{\alpha}$ is sensitive to value changes, it can consistently learn safer policies with lower cost than using a fixed $\alpha$.
\begin{figure*}[h!]
    \centering
    \includegraphics[width=0.67\textwidth]{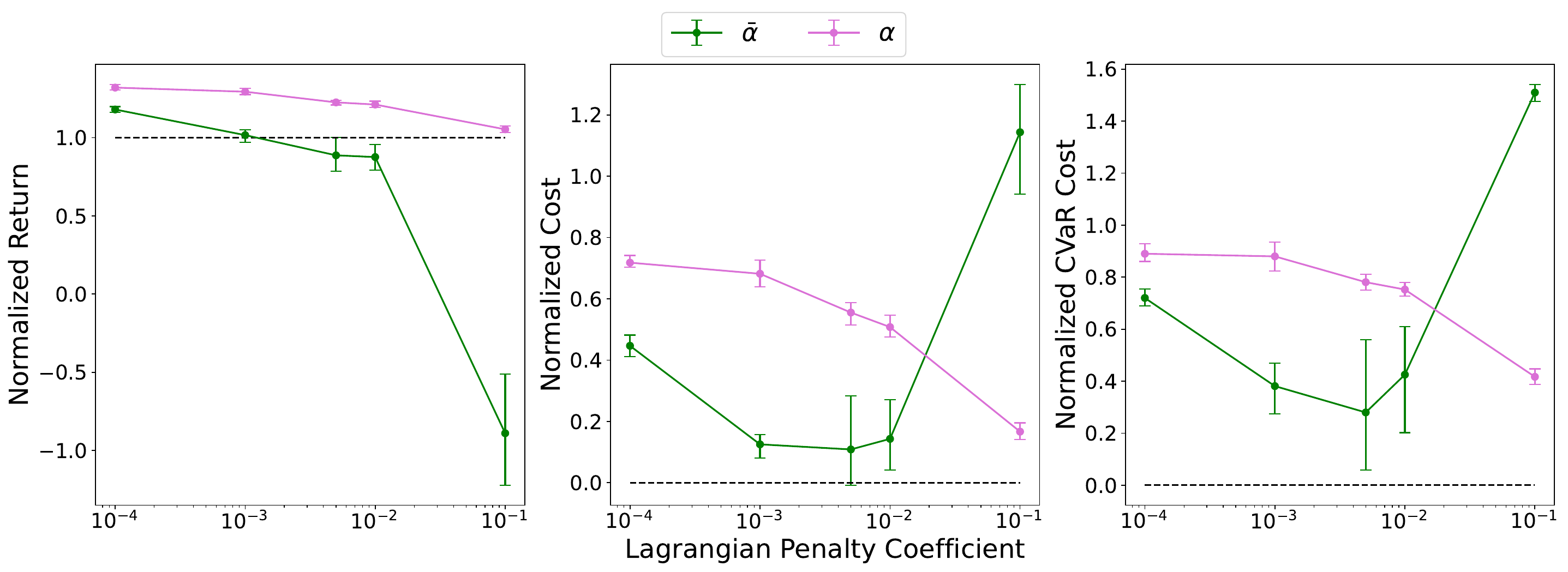}
    \caption{\textbf{Sensitivity to Lagrangian Penalty Coefficient.} Experimental result on Point-Circle2 with different penalty coefficient $\{0.0001, 0.001, 0.005, 0.01, 0.1\}$, after training for 1 million steps. Mean and 95\% CIs over 5 seeds. }
    \label{fig:lagrangian_penalty}
\end{figure*}


\subsection{Comparison with ground truth cost}
\label{appendix:ground_truth_cost}
In Figures \ref{fig:w_wo_ground_cost_velocity}, \ref{fig:w_wo_ground_cost_navigation}, we compare \our algorithm with and without access to ground-truth cost. When the ground-truth cost is available, we bypass cost model learning and directly use the true cost information within the \our algorithm. For the velocity-constrained task, the performance of \our with and without ground-truth cost is nearly identical. In the Point-Circle2 task, \our with ground-truth cost achieves better performance. However, in the Point-Goal1 and Point-Button1 environments, \our without ground-truth cost learns low-cost policies, though at the expense of lower reward, compared to \our with ground-truth cost.

\begin{figure*}[h!]
    \centering
    \includegraphics[width=0.68\textwidth]{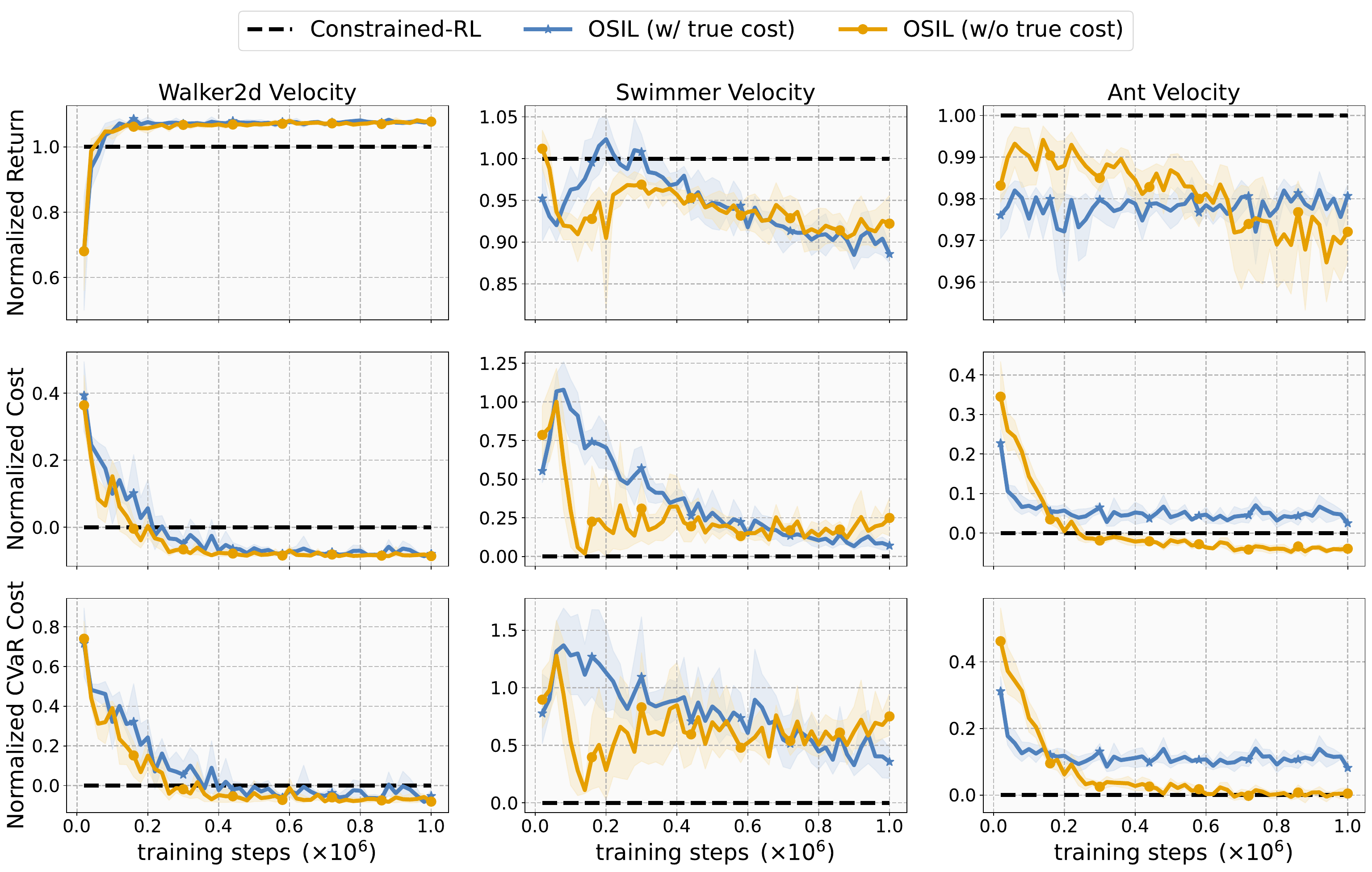}
    \caption{\textbf{Comparison of \our with and without ground truth cost.} Experimental result on Walker2d-Velocity, Swimmer-Velocity, Ant-Velocity. The shaded area represents the standard error.}
    \label{fig:w_wo_ground_cost_velocity}
\end{figure*}

\begin{figure*}[h!]
    \centering
    \includegraphics[width=0.68\textwidth]{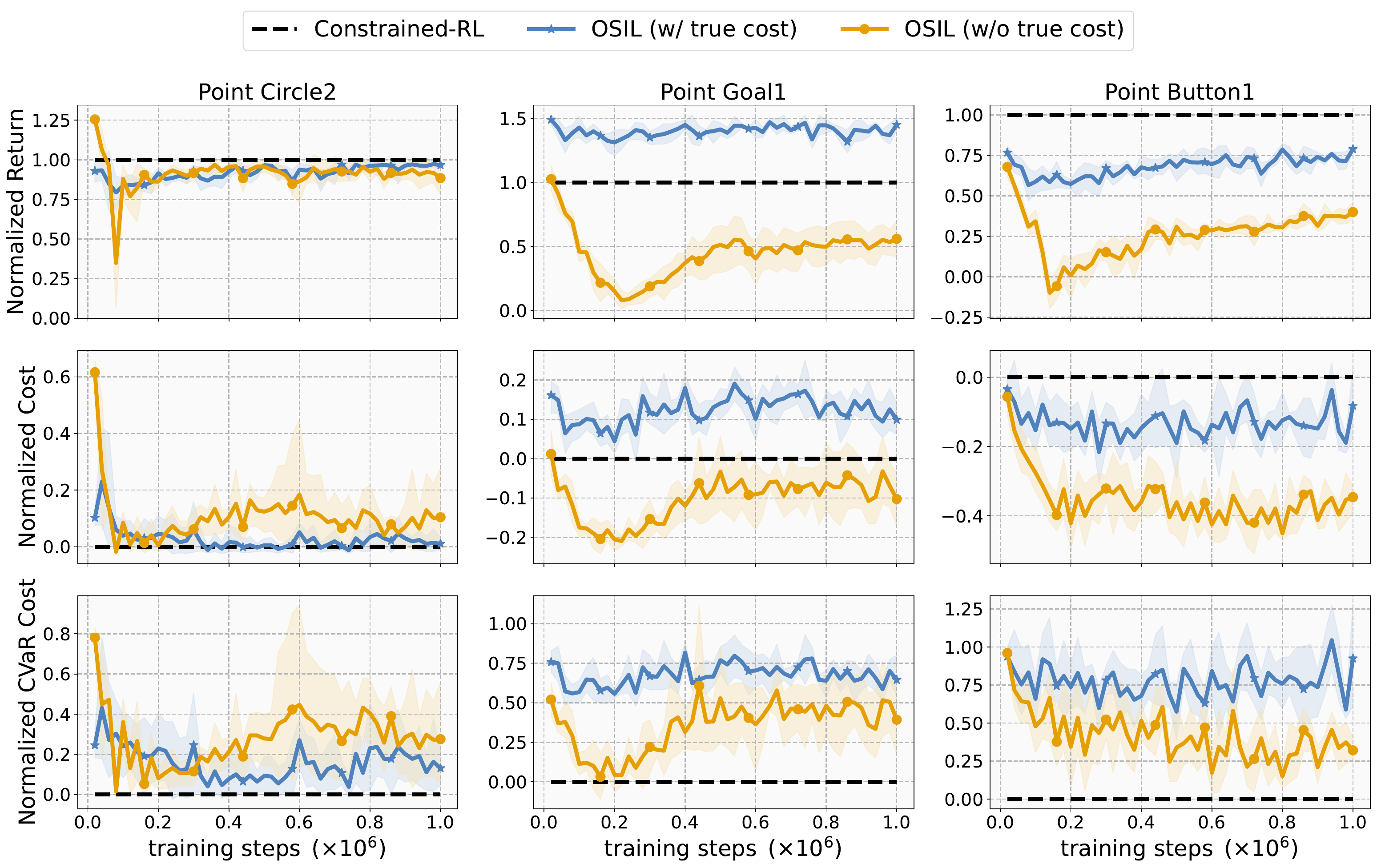}
    \caption{\textbf{Comparison of \our with and without ground truth cost.} Experimental result on Point-Circle2, Point-Goal1, Point-Button1. The shaded area represents the standard error. }
    \label{fig:w_wo_ground_cost_navigation}
\end{figure*}

In Figure \ref{fig:cost_pred_velocity}, \ref{fig:cost_pred_navigation} we compare the true total cost of the trajectory with the predicted total cost of the trajectory from the union dataset. We estimate the predicted total cost of the trajectory using the learned cost model $\tilde{c}$. We observe that for Walker2d-Velocity, Ant-Velocity, Point-Circle2 tasks our learned cost model is able to recover the true total cost of the trajectory. We are partially able to recover the true total cost of the trajectory for Swimmer-Velocity, Point-Goal1 and Point-Button tasks. 
\begin{figure*}[h!]
    \centering
    \includegraphics[width=0.7\textwidth]{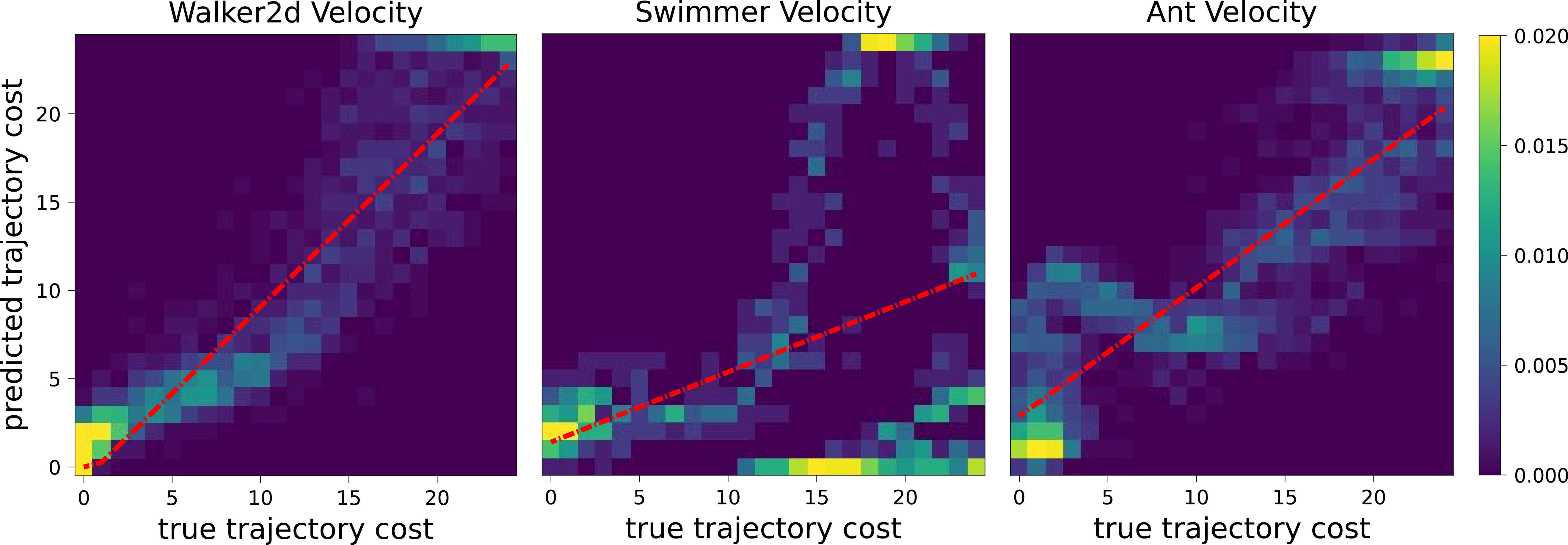}
    \caption{\textbf{Comparison of ground truth cost with predicted cost of the union trajectory.} Experimental result on Walker2d-Velocity, Swimmer-Velocity, Ant-Velocity.}
    \label{fig:cost_pred_velocity}
\end{figure*}

\begin{figure*}[h!]
    \centering
    \includegraphics[width=0.7\textwidth]{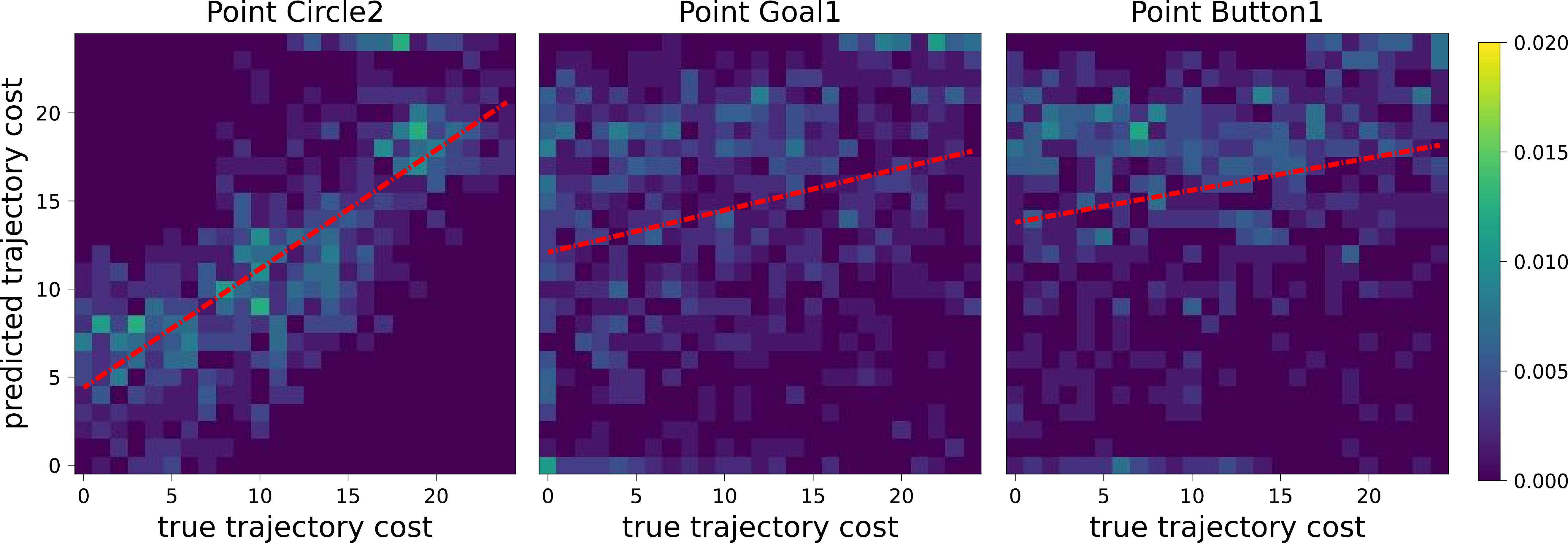}
    \caption{\textbf{Comparison of ground truth cost with predicted cost of the union trajectory.} Experimental result on Point-Circle2, Point-Goal1, Point-Button1.}
    \label{fig:cost_pred_navigation}
\end{figure*}


\subsection{Performance on Noisy Non-Preferred Dataset.}
\label{appendix:nosiy_non_preferred}
\begin{figure*}[h!]
    \centering
    \includegraphics[width=0.7\textwidth]{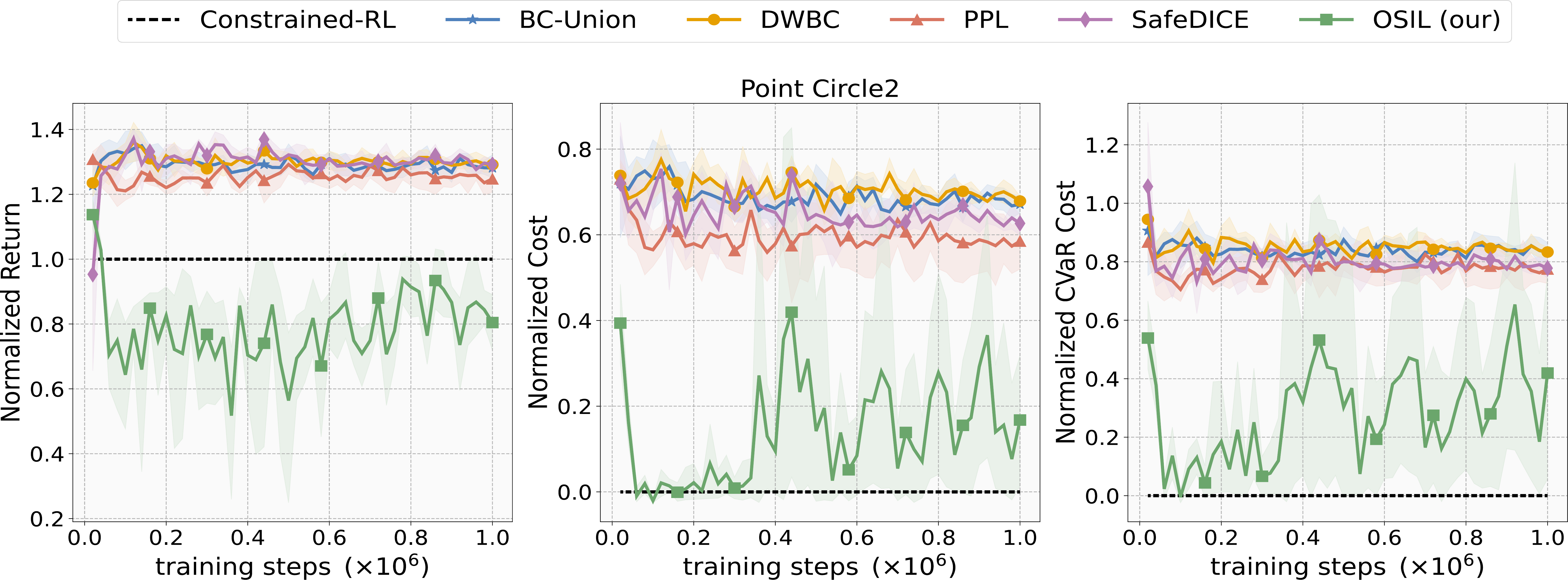}
    \caption{\textbf{Performance on Noisy Non-Preferred Dataset.} We conducted experiment on Point-Circle2 environment, where 20\% of the trajectories in the non-preferred dataset were mislabeled, i.e., these 20\% trajectories are not high-cost but samples from union dataset. We observe that our proposed algorithm is able to outperform all the baselines.}
    \label{fig:noisy_20p_non_preferred}
\end{figure*}

\begin{figure*}[th!]
    \centering
    \includegraphics[width=0.7\textwidth]{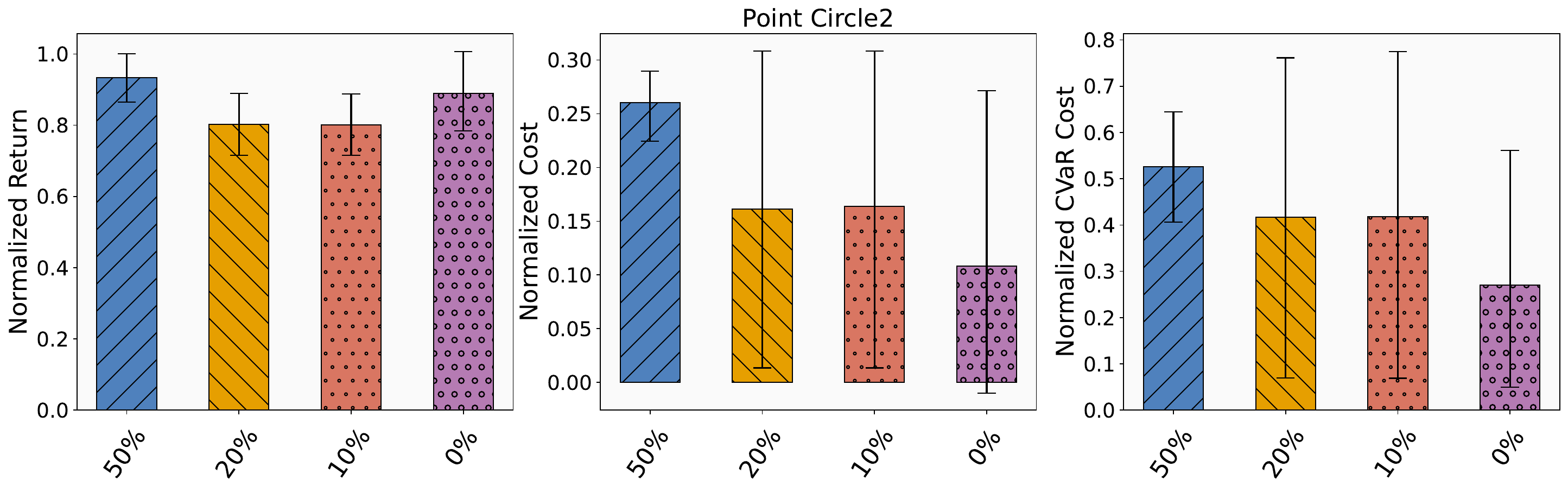}
    \caption{\textbf{Performance on Noisy Non-Preferred Dataset.} We conducted experiment with different percentages of noise (i.e, noise = $\{0\%, 10\%, 20\%, 50\%\}$) in the $\mathcal{D}_N$ dataset for Point-Circle2 and report the mean performance of the \our after training for 1 million steps. We observe that as the noise level increases, \our performance also decreases. Mean and 95\% CIs over 5 seeds.}
    \label{fig:noisy_non_preferred}
\end{figure*}


\section{Additional Discussions}
\label{appendix:additional_discussions}
\subsection{Practical Implication of Minimizing Average KL divergence}
\label{appendix:avg_kl}

We approximate the minimization of maximum KL with the average KL divergence, Equation \ref{eq:avg_kl}. The union dataset $\mathcal{D}_U$ contains states that are generated by executing policy $\pi_U$. We want to minimize $\max_s D_{KL}(\pi_U(s)\; ||\; \pi(s))$. However, computing the maximum is often impractical due to the many constraints. To address this, we approximate it by minimizing the average KL divergence, defined in equation \ref{eq:avg_kl} as $\mathbb{E}_{s \sim \mathcal{D}_U} [D_{KL}(\pi_U(s)\; ||\; \pi(s))]$. The average KL divergence is defined as an expectation over $\mathcal{D}_U$; it is not computed over a uniform distribution across the entire state space. While minimizing the average KL divergence is more tractable, it does not guarantee that the KL divergence is small for all states. Some states may still exhibit high KL divergence. However, such states will typically have low visitation probability under $\pi_U$. If a state is frequently visited under $\pi_U$, minimizing average KL would force the KL divergence at that state to be small. Thus, even though some rarely visited states may have large divergences, this is usually acceptable in practice because the agent is unlikely to encounter them.

We estimate the average KL divergence using the union dataset $\mathcal{D}_U$. If $\mathcal{D}_U$ lacks sufficient coverage of the state encountered by policy $\pi_U$, then in such cases, the KL divergence can be high for states that are visited under the policy $\pi_U$. This can lead the learned policy to hallucinate actions for underrepresented states, causing its behavior to diverge from that of $\pi_U$. 

\subsection{Comparison of \texorpdfstring{$\alpha$}{alpha} in TD3+BC with \our}
\label{appendix:compare_alpha}

\our formulation looks similar to TD3+BC. The TD3+BC \cite{td3_plus_bc_neurips_2021} objective is defined as:
\begin{align}
    \min_\pi &- \mathbb{E}_{(s,a) \sim \mathcal{D}_U} \left[ (\pi(s) - a)^2 + \alpha Q^\pi_r(s, \pi(s))\right] \nonumber \\
    & \text{where, } \alpha = \dfrac{\bar{\alpha}}{\dfrac{1}{N} \sum_{s_i,a_i\sim \mathcal{D}_U} |Q^\pi_r(s_i, a_i)|} 
\end{align}
In TD3+BC, the range of values that $Q^\pi_r$ can take depends on the reward, whereas the BC term can take value of at most 4 if the action range is assumed to be $[-1, 1]$. The $\alpha$ in TD3+BC is normalization term based on the average absolute value of $Q^\pi_r$. It balances the scale between the values of $Q^\pi_r$ and the BC term. However, in \our, $\alpha$ balances the trade-off between safety (minimizing cost-action value) and performance (maximizing reward through imitation). 


\subsection{Inclusion of low-reward trajectories in Non-Preferred Dataset}
\label{appendix:non_preferred_low_reward}

Offline safe IL algorithms finds it difficult to learn safe policy when low-reward trajectories are included in non-preferred dataset $\mathcal{D}_N$ because of following reasons:
\begin{enumerate}
    \item Low-reward trajectories can arise from a wide range of suboptimal behaviors depending on the environment. Consider an example where the agent receives low reward can result from numerous behaviors such as standing still, moving back and forth, circling, or heading in any direction other than toward the goal. To effectively learn to avoid such multi-mode behaviors, the non-preferred dataset must capture this diversity through sufficient coverage. However, with only a limited number of non-preferred trajectories, the policy may find it difficult to avoid these non-preferred behaviors. 
    \item The non-preferred dataset is a subset of the union dataset. When low-reward trajectories are included in the non-preferred set, the union dataset then contains a mix of both high-reward and low-reward trajectories with varying costs, essentially covering the entire space of possible trajectories. However, since explicit reward and cost labels are unavailable, and the non-preferred dataset is limited, it becomes difficult to identify which behavior the agent should imitate and which to avoid. As a result, learning a safe policy in this setting is challenging.
\end{enumerate}

We conducted experiments on Point-Circle2 and Point-Goal1 environments, Figure \ref{fig:highcost_lowreward}. In this setup, the non-preferred dataset consisted of high-cost and low-reward trajectories. The union dataset included a mix of high and low reward trajectories with varying costs, i.e., effectively covering the full spectrum of possible trajectories. Across both environments, we observe that none of the algorithms could learn safer policies. Although the union dataset includes all available trajectories, it is dominated by high-return trajectories (See DSRL dataset \cite{offline_safe_rl_dataset_jmlr_2024}). As a result, all algorithms tend to exhibit high-return behavior. However, since high-cost behaviors are underrepresented in the limited non-preferred dataset, the algorithms fail to learn safe behavior. While inclusion of low-reward trajectories in non-preferred dataset remain a challenging case for all offline safe IL methods, this lies outside our target setting, where non-preferred trajectories primarily capture high-cost behaviors. We leave the inclusion of low-reward trajectories in the non-preferred dataset to future work.

\begin{figure*}[th!]
    \centering
    \includegraphics[width=0.77\textwidth]{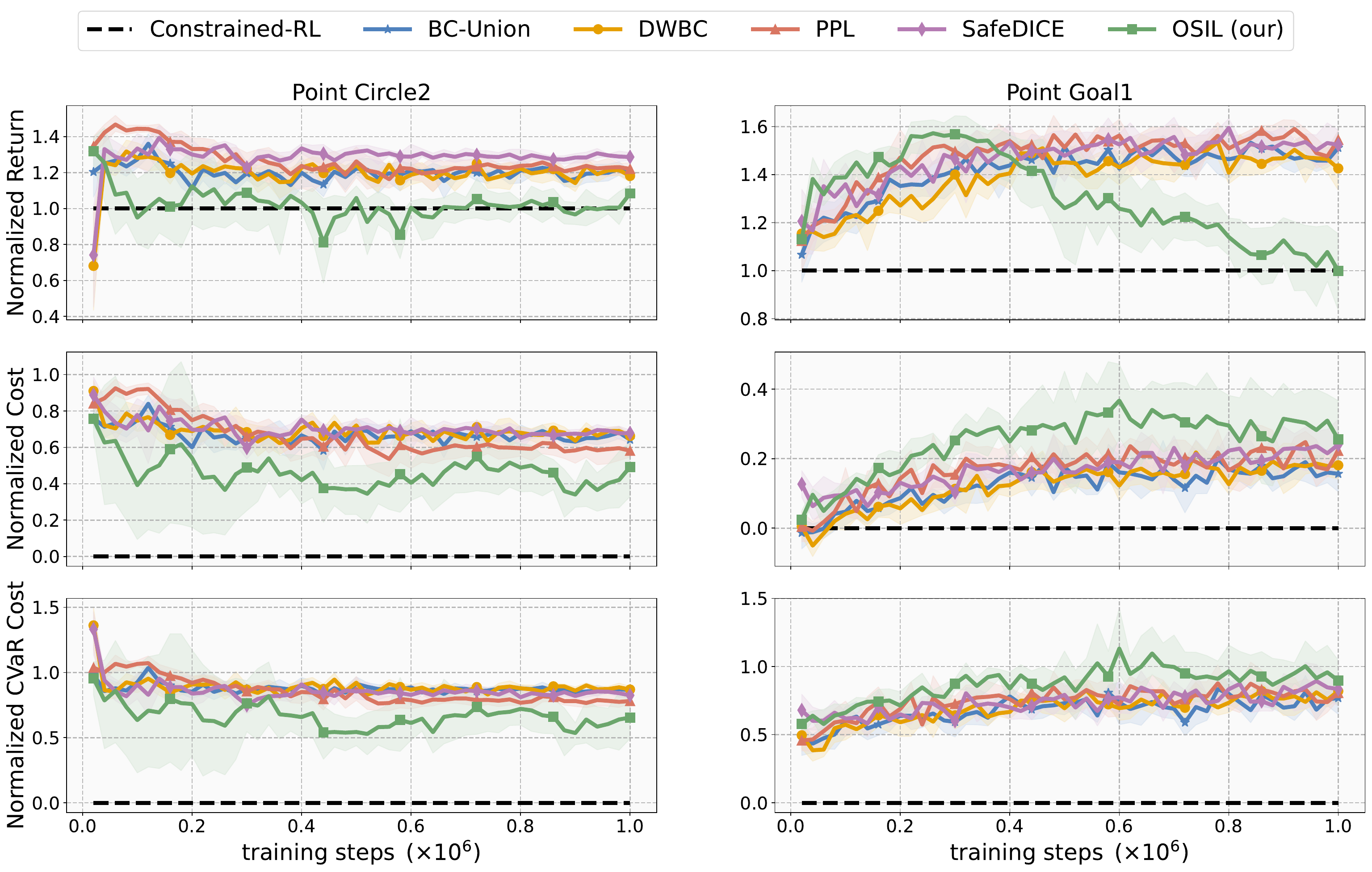}
    \caption{\textbf{Inclusion of low-reward trajectories in Non-Preferred Dataset.} For both Point-Circle2 and Point-Goal1 environments, high-cost behaviors are underrepresented in the limited non-preferred dataset $\mathcal{D}_N$, we observe that none of the algorithms could learn safe policies. Mean and 95\% CIs over 5 seeds.}
    \label{fig:highcost_lowreward}
\end{figure*}


\section{Compute Details}
\label{appendix:compute}

We used two server nodes equipped with the following specification:
\begin{itemize}
    \item CPU: AMD EPYC 7543 32-Core Processor
    \item Memory: 30GB
    \item GPU: NVIDIA A30
\end{itemize}

\section{Broader Impact}
\label{appendix:broader_impact}
This work lays an algorithmic foundation for safe imitation learning in offline settings. In many real-world domains, specifying accurate safety costs can be difficult. However, it is often feasible to collect trajectories that reflect undesirable or unsafe behavior, implicitly conveying what the agent should avoid. Our approach learns safe behavior from these trajectories. To effectively learn safe behavior, our approach requires sufficient coverage of the space of undesirable behaviors; if key undesirable behaviors are missing from the dataset, the agent may fail to learn to avoid them. Our work has strong potential for real-world applications where specifying safety criteria is difficult, such as autonomous driving or toxicity moderation in large language models. However, because our method depends on human-provided examples of undesired behavior, it is susceptible to encoding human biases. Therefore, regulatory oversight might be necessary to mitigate these biases during non-preferred dataset collection. 

\end{document}